\newtheorem{claim}{Claim}
\newtheorem{remark}{Remark}
\newcommand{\qed}{\mbox{\ \ \ }\rule{6pt}{7pt} \bigskip}
\newenvironment{oneshot}[1]{\@begintheorem{#1}{\unskip}}{\@endtheorem}
\newcommand{\AutoAdjust}[3]{\mathchoice{ \left #1 #2  \right #3}{#1 #2 #3}{#1 #2 #3}{#1 #2 #3} }
\newcommand{\Xcomment}[1]{{}}
\newcommand{\InParentheses}[1]{\AutoAdjust{(}{#1}{)}}
\newcommand{\InBrackets}[1]{\AutoAdjust{[}{#1}{]}}
\newcommand{\Ex}[2][]{\operatorname{\mathbf E}_{#1}\InBrackets{#2}}
\newcommand{\Exlong}[2][]{\operatornamewithlimits{\mathbf E}\limits_{#1}\InBrackets{#2}}
\newcommand{\Prx}[2][]{\operatorname{\mathbf{P}}_{#1}\InBrackets{#2}}
\newcommand{\twonorm}[1]{\left\| {#1} \right\|_2}
\newcommand{\vect}[1]{\ensuremath{\mathbf{#1}}}
\newcommand{\Z}{\mathbb{Z}}
\newcommand{\be}{\begin{equation}}
\newcommand{\ee}{\end{equation}}
\newcommand{\bee}{\begin{equation*}}
\newcommand{\eee}{\end{equation*}}
\newcommand{\argmax}{\mathop{\rm argmax}}
\newcommand{\gain}{g}
\newcommandtwoopt{\gainit}[2][i][t]{\gain_{#1#2}}
\newcommand{\gains}{\vect{\gain}}
\newcommand{\gainst}[1][t]{\gains_{#1}}
\newcommand{\gainsht}[1][t-1]{\gains_{[0,#1]}}
\newcommand{\Gain}{G}
\newcommandtwoopt{\Gainit}[2][i][t]{\Gain_{#1#2}}
\newcommand{\Gains}{\vect{\Gain}}
\newcommand{\Gainst}[1][t]{\Gains_{#1}}
\newcommand{\Gainsh}{\Gains_{[0,T]}}
\newcommand{\alg}{\mathcal{A}}
\newcommand{\algt}[1][t]{\alg_{#1}}
\newcommand{\dist}{\mathcal{D}}
\newcommand{\binomdist}[1][k]{\text{Binom}(#1,\frac{1}{2})}
\newcommand{\regret}{R}
\newcommand{\regretf}[1][T]{\regret_{_{#1}}}
\newcommand{\regretd}{\regret_{\delta}}
\newcommand{\vx}{\vect{x}}
\newcommand{\vp}{\vect{p}}
\newcommand{\srw}{\text{SRW}}
\begin{document}
\pagenumbering{gobble}
\title{Towards Optimal Algorithms for Prediction with Expert Advice}

\author{
Nick Gravin\thanks{Microsoft Research.
One Memorial Drive,
Cambridge, MA 02142. \texttt{ngravin@gmail.com}.}
\and
Yuval Peres\thanks{Microsoft Research.
One Microsoft Way, Redmond, WA 98052.
\texttt{peres@microsoft.com}, \texttt{balu2901@gmail.com}.
}
\and
Balasubramanian Sivan\footnotemark[2]
}

\date{}

\maketitle

\begin{abstract}
We study the classical problem of prediction with expert advice in the 
adversarial setting with a geometric stopping time. In 
1965,~\citeauthor{Cover65} gave the optimal algorithm for the case of 2 
experts. In this paper, we design the optimal algorithm, adversary and regret 
for the case of 3 experts. Further, we show that the optimal algorithm for $2$ 
and $3$ experts is a probability matching algorithm (analogous to Thompson 
sampling) against a particular randomized adversary. Remarkably, our proof 
shows that the probability matching algorithm is not only optimal against this 
particular randomized adversary, but also minimax optimal. 

Our analysis develops upper and lower bounds simultaneously, analogous to the 
primal-dual method. Our analysis of the optimal adversary goes through delicate 
asymptotics of the random walk of a particle between multiple walls. We use the 
connection we develop to random walks to derive an improved algorithm and 
regret bound for the case of $4$ experts, and, provide a general framework for 
designing the optimal algorithm and adversary for an arbitrary number of 
experts.

\end{abstract}

\section{Introduction}
\label{sec:intro}
Predicting future events based on past observations, a.k.a. prediction with expert advice, is a
classic problem in learning.
The experts framework was the first framework proposed for online learning and encompasses
several applications as special cases.
The underlying problem is an online optimization problem: a {\em player} has to make a decision at each time step, namely, decide which of the $k$ experts' advice to follow.
At every time $t$, an {\em adversary} sets gains for each expert: a gain of $\gainit$ for expert $i$ at time $t$. Simultaneously, the player, seeing the gains from all previous steps except $t$, has to choose an action, i.e., decide on which expert to follow. If the player follows expert $j(t)$ at time $t$, he gains $\gainit[j(t),][t]$. At the end of each step $t$, the gains associated with all experts are revealed to the player, and the player's choice is revealed to the adversary. In the {\em finite horizon model}, this process is repeated for $T$ steps, and the player's goal is to perform (achieve a cumulative gain) as close as possible to the best single action (best expert) in hindsight, i.e., to minimize his
\emph{regret} $\regretf$:
$$\regretf =
\max_{1\leq i \leq k} \sum_{t=1}^T \gainit - \sum_{t=1}^T \gainit[j(t),][t].$$
Apart from assuming that the $\gainit$'s are bounded in $[0,1]$, we don't assume anything else about the gains.
Just as natural as the finite horizon model is the model with a {\em geometric horizon}: the stopping time is a geometric random variable with expectation $\frac{1}{\delta}$. In other words, the process ends at any given step with probability $\delta$, independently of the past. 
In this paper, we study both the finite horizon model and the geometric horizon model.

\paragraph{Questions and motivation.} Given the breadth of applications and the 
significance of the experts problem in online learning, in this work we seek to 
understand and crisply characterize the structure of the optimal algorithm and 
the structure of the worst case input sequences. We ask: 
\begin{enumerate}
\item What is the precisely optimal algorithm and regret values?
\item Does the optimal algorithm have a succinct and intuitive description (even for $2$ experts)?
\item What are the hardest (adversarial) sequences of experts' gains and do they follow a succinct pattern?
\end{enumerate}

Our motivation in exploring these questions include the following. 
\begin{enumerate}
\item Half a century after Cover~\cite{Cover65} described the optimal adversary for the case of $2$ experts, we still do not have general insights about the structure of the optimal algorithm or the optimal adversarial sequences. 
\item Several applications of the experts paradigm involve dealing with a small constant number of experts. What amount of gain can the optimal algorithm get over the multiplicative weights algorithm for a constant number of experts?
\item The problem is theoretically clean and challenging: a priori it is not 
even clear if the algorithm and adversary are succinctly describable. It could 
well be that the optimal algorithm's actions depend on various aspects of 
history in a manner that cannot be succinctly described. 
\end{enumerate}

\paragraph{Notation:} We fix some notation before proceeding. 
We denote by $\Gainit$ the cumulative gain of expert $i$ after $t$ steps. Namely, $\Gainit=\sum_{s=1}^{t}\gainit[i][s]$. We show that in the worst-case instance there is no benefit in using gains other  than $0$ and $1$, so we restrict to $\gainit \in \{0,1\}$. The notion of optimality used for the experts framework is the minimax regret
obtained against all possible adversarial sequences of experts' predictions,
\emph{the adversary} for short. We study the optimal adversary that inflicts maximal
regret (maxmin) against all possible algorithms.

\subsection*{Our contributions}
{\em Balanced adversary:} Our first general insight about the structure of the optimal adversary is that it is {\em balanced across all experts} at every time step, i.e., irrespective of the experts' past gains, the adversary sets equal expected gain for each expert in this step. This insight is pervasive in this paper and greatly simplifies the problem in that it lets us describe the optimal (minimax/maxmin) regret without making any reference to the optimal algorithm. Indeed, {\em every} algorithm performs equally well against the maximin optimal adversary that equalizes the expected gains of all experts\footnote{We clarify that this doesn't mean all algorithms get the same minimax regret. It is only the maximin optimal adversary that is balanced and not every adversary. In other words, if the adversary is maximin optimal, all algorithms are equal. But if the algorithm is not minimax optimal, the optimal adversary for that algorithm is not maximin optimal and hence not necessarily balanced, and will inflict a larger regret on the said algorithm than a balanced adversary does.}, and gets a gain of the average over all the $k$ experts of the cumulative gain, namely, $\frac{1}{k}|\Gains|_{1}$. Given this, the adversary's problem of maximizing regret can be reduced to maximizing the difference between maximum and average of the cumulative gains vector, i.e., $|\Gains|_{\infty}-\frac{1}{k}|\Gains|_{1}$.
\\
\\
\noindent {\em Maximizing the number of collisions between the leading and second best expert:} Our second insight about the adversary's structure is that its objective, namely, $|\Gains|_{\infty}-\frac{1}{k}|\Gains|_{1}$ never changes in expectation, except when there is no unique expert with the largest cumulative gains. This is because, the maximim optimal adversary being balanced implies that all experts' cumulative gains increase equally in expectation, including the cumulative gain of the expert who is currently leading. Thus the two quantities $|\Gains|_{\infty}$ and $\frac{1}{k}|\Gains|_{1}$ increase equally in expectation implying that a balanced adversary will find no way to increase its objective in expectation. The only time when this breaks is when the leading expert is not unique: this is because in this situation, the probability that the maximum cumulative gain increases is the probability of the cumulative gain of any one of the leading experts getting increased. Given this, the adversary's problem essentially boils down to the crisp and challenging probability question of constructing gain sequences that maximizes the number of collisions between the leading and the second leading expert.  Namely, construct a balanced random walk in $\Z^k$ with the objective of maximizing the number of collisions between the largest and second largest coordinates. 
\\
\\
\noindent{\em Designing the random walk that maximizes the number of collisions:} We use this insight about the adversary's problem being a controlled random walk to construct such walks and hence succinct adversaries for the case of $k=2$ and $3$ experts (this also gives a simple alternative proof for Cover's optimal adversary). While the case of $k=2$ is special, the progress for $k=3$ crucially relies on the above mentioned insights. While constructing the optimal such random walk for general $k$ is still complicated, we believe that this reduction is powerful and gives a useful starting point for thinking about possible candidates that come close. For instance, the ``comb adversary'' described later in the introduction, has a simple but non-trivial structure that was inspired from the number-of-collisions characterization. 
\\
\\
\noindent{\em Probability matching algorithm:} We establish a strong connection between the structure of the optimal algorithm and the optimal adversary. Namely, the optimal algorithm is a probability matching algorithm (analogous to the popular Thompson sampling procedure) that follows each expert with the probability that  this expert finishes as the leader, when the sequence of gains is set by the optimal adversary .

\noindent We describe our results in detail below.
\paragraph{1. Two experts.} The optimal adversary, designed by~\citeauthor{Cover65}, chooses one expert
uniformly at random, sets a gain of $1$ for that expert, and a gain of $0$ for the other. We give a very simple characterization of the {\em unique} optimal algorithm in {\em both the finite horizon and the geometric horizon models}, namely, {\em follow each expert with the probability that he finishes as the leading expert\footnote{If there is a tie in the finite horizon model, we consider the leader to be the unique expert who did not have any expert ahead of him in the last two steps. In the geometric horizon model ties are just broken uniformly at random.} (the one with maximum cumulative gains), when gains are set by Cover's optimal adversary}. Further, this probability of finishing as a leading expert has a simple analytical expression in the geometric horizon model, given in Algorithm~\ref{alg:2opt} (see Theorem~\ref{thm:2opt}). The finite horizon has a simple expression too (see Theorem~\ref{thm:2pmatching}). The optimal algorithm achieves a regret of $\sqrt{\frac{T}{2\pi}}$ in the finite horizon model, and $\frac{1}{2}\frac{1}{\sqrt{2\delta}}$ in the geometric horizon model, respectively as $T \to \infty$ and $\delta \to 0$ (see Theorem~\ref{thm:2opt} for the precisely optimal regret for every $\delta$).

\begin{algorithm}
\caption{: Optimal Algorithm for Geometric Horizon Model with Two Experts}
\label{alg:2opt}
\begin{algorithmic}[1]
\STATE Initialize $\xi = \frac{1-\sqrt{1-(1-\delta)^2}}{1-\delta} \sim 1-\sqrt{2\delta}$ as $\delta \to 0$
\STATE Convention: Leading expert (larger cumulative gains) is numbered $1$, and lagging expert
is numbered $2$
\FOR {Each time step $t$ till the game stops}
\STATE Compute cumulative gains for both experts: $\Gainit[1][t] = \sum_{s=1}^{t} \gainit[1][s],\text{ and }
\Gainit[2][t] = \sum_{s=1}^{t} \gainit[2][s]$
\STATE Let $d = \Gainit[1][t] - \Gainit[2][t]$. Note that by definition $d \geq 0$
\STATE Follow the leading expert with probability he will finish as leader, namely, $p_1(d) =
1-\frac{1}{2}\xi^d$
\STATE Follow the lagging expert with probability he will finish as leader, namely, $p_2(d) = \frac{1}{2}\xi^d$
\ENDFOR
\end{algorithmic}
\end{algorithm}

\paragraph{2. Three experts.} We derive the precisely optimal algorithm, adversary and regret values for three experts in the geometric horizon model (see Theorem~\ref{thm:3opt}). The optimal regret as $\delta \to 0$ is asymptotic to $\frac{2}{3}\frac{1}{\sqrt{2\delta}}$.
The optimal adversary (as $\delta \to 0$)\footnote{The optimal adversary for all values of $\delta$ (asymptotic or not) is almost identical to this. We describe this in Section~\ref{sec:delta}.}  is as follows: it pairs up the middle and the lagging experts, and together this pair always disagrees with the leading expert. That is, the $\gainit$'s are of the form $(0,1,1)$ or $(1,0,0)$ where the ordering in the tuple captures the leading, middle and lagging experts (and do not refer to the identities of experts). The optimal algorithm is again a simple probability matching algorithm to the optimal adversary, that follows each expert with the probability that this expert finishes in the lead.

The case of $3$ experts is significantly more complicated than $2$ experts. In particular, while the optimal adversary for $2$ experts was discovered back in 1965 (\cite{Cover65}), the optimal adversary for $3$ experts was not known so far. 
The relative coordinate system we introduce, that numbers experts according to their cumulative gains, provides a convenient way to describe the optimal adversary.

\paragraph{3. Arbitrary number of experts.} All our basic results continue to 
hold in both geometric and finite horizon models. I.e., the optimal adversary 
(i) plays only $\gainit\in\{0,1\}$ (and not in $[0,1]$), (ii) is balanced, 
(iii) at every step plays only one of the finitely
many vertices of the convex polytope of balanced distributions. Prior to this work, given $T$ and $k$, an algorithm for computing the precisely optimal adversary was not known.  Result (iii) reduces the search space of the optimal adversary to finitely many balanced distributions, and thereby enables us to write a mundane dynamic program of size $O(T^k)$. Note that even after realizing that $\gainit \in \{0,1\}$ without loss of generality, the adversary has infinitely many balanced probability distributions available to choose from in every step, and thus, a priori it is not clear how to write a meaningful dynamic program.

Through this dynamic program, we found that the optimal adversary for $k \geq 4$ {\em does not} always have a simple description like for $k=2,3$. Further, we observe that unlike $k=2,3$, for $4$ experts, the optimal adversary is already $\delta$-dependent, and its actions at a given configuration of cumulative gains depend on the exact values of cumulative gains and not just their order. Nevertheless, inspired by results from this dynamic program, we conjecture that there is a simple adversary (``comb adversary'') which is asymptotically (in $\delta$ or $T$) optimal: split experts into two teams $\{1,3,5,\dots\}$ and $\{2,4,6,\dots\}$ and increment the gains of all experts in exactly one of these teams chosen uniformly at random.  We analyze the comb adversary for $k=4$ and show that as $\delta \to 0$, it inflicts a regret of $\frac{\pi}{4}\frac{1}{\sqrt{2\delta}}$ (see Theorem~\ref{thm:4brownian}). We observed that for reasonably small values of $\delta$, the optimal regret converges to our lower bound of $\frac{\pi}{4}\frac{1}{\sqrt{2\delta}}$ indicating that the comb adversary is indeed asymptotically optimal for $k=4$.

\paragraph{Remarks}
\begin{enumerate}[leftmargin=*]
\setlength\itemsep{0.2em}
\item 
In this work, we develop the optimal algorithm and adversary {\em simultaneously}, thereby, completely bridging the gap
between upper and lower bounds for a small number of experts (our analysis obtains the optimal regret for $k=2,3$ experts for every value of $\delta$, and not just asymptotically as $\delta \to 0$).

\item 
Although the optimal algorithm for $k=2,3$ experts does a probability matching with respect to a particular adversary it turns out that this algorithm is not only optimal against this adversary, but also minimax optimal against all possible adversaries.
Our algorithms for $k=2,3$ experts (also our conjectured optimal algorithm for $k$ experts) are simple and practical. One can implement our algorithms as follows: from any configuration of cumulative gains simulate the ``comb adversary'' till the end of the process and follow the expert who finishes in the lead. Simulating the comb adversary simply entails flipping a coin in every step and incrementing by $1$ the gains of the respective (odd or even numbered) team of experts.

\item The comb adversary that we introduce and analyze presents a simple to describe random process. But even for $k=3,4$ analyzing this process requires an understanding of non-trivial aspects of simple random walk (see Theorems~\ref{thm:3brownian} and~\ref{thm:4brownian}).  Developing a method to analyze this process for general $k$ is a clean and challenging question on random walks.
\end{enumerate}

\paragraph{Comparison with Multiplicative Weights Algorithm.} It is known that for general bounded gains the widely used multiplicative weights algorithm (MWA), obtains a $\sqrt{\frac{T\ln k}{2}}$ regret and this regret is asymptotically optimal as both $\{T,k\} \rightarrow \infty$ (see~\citet{BFHHSW97} and
its generalization by~\citet{HKW95}). However, asymptotic analysis in $k$ does not shed much light on the structure of the optimal algorithm and the hardest sequences of experts' gains: this is because the quantity $\sqrt{\frac{T\ln k}{2}}$ is insensitive if we employ $100$ times as many experts and it is not optimal for a small (constant) number of experts. In this paper, we show that the  optimal algorithm is not in the family of multiplicative weight algorithms. Namely, we show that the optimal algorithm cannot be expressed as a MWA or even as a convex combination of MWAs. We refer the reader to Appendix~\ref{app:mwa} for a detailed discussion and proof. 

\paragraph{Related work.}
In this work, our goal is to identify the structure of the optimal algorithm and the adversary via a precise and efficient algorithmic description. There are recent works that {\em characterize} the optimal algorithm/adversary and regret as the supremum or infimum of some stochastic process, rather than give an efficient algorithmic description. There is also a significant body of recent work that either identify {\em approximately} optimal algorithms/adversaries, or, identify special cases where the optimal adversary can be precisely described. All of these relaxations allow for solving more general frameworks. But this body of work doesn't identify the precisely optimal algorithm/adversary/regret for the classical setting. In contrast, in our work we show that probability matching is precisely optimal, and we identify the precisely optimal adversary for the classical setting.  We discuss all these lines of recent work after discussing some classics in this area.

As mentioned earlier, for the exact setting we consider in this work, the work of~\cite{Cover65} is most closely related as it gives the optimal adversary and algorithm for the case of $2$ experts.

\textit{Classic works:} The book by~\citet{BL06-book} is an excellent source for both applications and references. The prediction with experts advice paradigm was introduced by~\citet{LW94}
and~\citet{Vovk90}. The famous multiplicative weights update algorithm was
introduced independently by these two works: as the weighted majority algorithm
by~\citeauthor{LW94} and as the aggregating algorithm by~\citeauthor{Vovk90}.
The pioneering work of~\citet{BFHHSW97} considered $\{0,1\}$ outcome space for
nature and showed that for the absolute loss function $\ell(x,y) = |x-y|$ (or
$g(x,y) = 1-|x-y|$), the asymptotically optimal regret is $\sqrt{\frac{T\ln
k}{2}}$. This was later extended to $[0,1]$ outcomes for nature
by~\citet{HKW95}. The asymptotic optimality of $\sqrt{\frac{T\ln k}{2}}$ for
arbitrary loss (gain) functions follows from the analysis of~\citet{Bianchi99}.
When it is known beforehand that the cumulative loss of the optimal expert is
going to be small, the optimal regret can be considerably improved, and such
results were obtained by~\citet{LW94} and~\citet{FS97}. With certain
assumptions on the loss function, the simplest possible algorithm of following
the best expert already guarantees sub-linear regret~\citet{Hannan57}. Even
when the loss functions are unbounded, if the loss functions are exponential
concave, sub-linear regret can still be achieved~\citet{BK99}.

\textit{Recent works:}~\cite{LS14} consider a setting where the adversary is restricted to pick gain vectors from the basis vector space $\{\mathbf{e}_1,\dots,\mathbf{e}_k\}$. For this set of gain vectors, the only balanced adversary is to pick a random expert in every step. Since our analysis shows that the optimal adversary is balanced without loss of generality, it is immediate that a uniformly random adversary is optimal in this setting.~\cite{AbernethyWY08} consider a different variant of experts problem where the game stops when cumulative loss of any expert exceeds given threshold. Here too there is a clear candidate for the optimal adversary: the same as in~\citeauthor{LS14}, namely, pick an expert uniformly at random at every step. They specify optimal algorithm in terms of the underlying random walk.
The notable distinction of both~\cite{LS14,AbernethyWY08} from our setting is that their adversary is simple and static, i.e., it does not depend on the prior history. The random process to be analyzed in their setting is a standard random walk in $\Z^k$, while the random process in our setting even for $k=3$ is non-trivial.~\cite{ABRT08} consider general convex games and compute the minimax regret exactly when the input space is a ball, and show that the algorithms of~\cite{Zinkevich03} and~\cite{HKKA06} are optimal w.r.t. minimax regret.~\cite{AABR09} provide upper and lower bounds on the regret of an optimal strategy for several online learning problems without providing algorithms, by relating the optimal regret to the behavior of a certain stochastic process.~\cite{MS10} consider a continuous experts setting where the algorithm knows beforehand the maximum number of mistakes of the best expert.~\cite{RST10} introduce the notion of sequential Rademacher complexity and use it to analyze the learnability of several problems in online learning w.r.t. minimax regret.~\cite{RST11} use the sequential Rademacher complexity introduced in~\cite{RST10} to analyze learnability w.r.t. general notions of regret (and not just minimax regret).~\citet{RSS12} use the notion of conditional sequential Rademacher complexity to find relaxations of problems like prediction with static experts that immediately lead to algorithms and associated regret guarantees. They show that the random playout strategy has a sound basis and propose a general method to design algorithms as a random playout. In our work, we show that random playout (probability matching) is not just a good strategy, but it is optimal, for the case of $k=2,3$ experts.~\citet{Koolen13} studies the
regret w.r.t. every expert, rather than just the best expert in hindsight and considers tradeoffs in the pareto-frontier.~\cite{MA13} characterize the minimax optimal regret for online linear optimization games as the supremum over the expected value of a function of a martingale difference sequence, and similar characterizations for the minimax optimal algorithm and the adversary.~\cite{MO14} study online linear optimization in Hilbert spaces and characterize minimax optimal algorithms. 

\section{Preliminaries}
\label{sec:prelim}
\paragraph{Adversary.} The adversary at each time $t$ increases the gain of
expert $i \in \{1,2,\dots,k\}$ by a value $\gainit\in[0,1]$.  Thus adversary decides on
$\{\gainit[_{i\in[k]}][t]\}_{t=1}^{t=T}$. In particular, for each time $t$ the
adversary decides on the distribution $\dist_t$ to draw $\gainst$ from.  In
general, the adversary could be adaptive: i.e., $\dist_t$
could depend, apart from the history of gains $\gainsht$ till time $t-1$, also on the player's
past choices. But for the experts problem, it is known (Lemma 4.1 in~\cite{BL06}) that an {\em
oblivious adversary}, whose distribution $\dist_t$ at time $t$ is a function
only of $\gainsht$, is equally powerful\footnote{For the case of adaptive
adversary, there is an alternative definition of regret known as policy
regret~\cite{ADT12}, where this reduction does not apply. However in our setting, we don't use policy regret as it is too powerful and results in a linear regret. Also, for the bandits setting, where the player gets the feedback only about the gains of his chosen action (and not of every action) it is unknown 
whether adaptive adversaries are any more powerful than oblivious adversaries.}. Thus we focus
on oblivious adversaries from now on. We denote the joint distribution for
all $t \leq T$ as $\dist$. We denote the cumulative gain till time $t$ of
expert $i$ by $\Gainit[i]=\sum_{s=1}^{t}\gainit[i][s]$.  We denote the vector
of cumulative gains at time $t$ by $\Gainst=(\Gainit[1],\dots,\Gainit[k])$, and
denote the entire history of cumulative gains by $\Gainsh$.

\paragraph{Player.} Before making his decision at time $t$, the player
observes all prior history, that is $\gainsht$, but doesn't observe $\gainit$.
He decides on which expert to
follow, and, if the player follows expert $i$, he gains $\gainit$
at the end of step $t$. Specifically, the player
decides on the distribution $\algt$ over experts $\{1,\dots,k\}$. In general, the
player could be adaptive: i.e., his distribution $\algt$ could depend, apart
from $\gainsht$, on his own past choices. But an {\em oblivious player}, whose distribution
$\algt$ at time $t$ is a function only of $\gainsht$, is equally powerful. Thus
we focus on oblivious players from now on. We use $\algt(\gainsht)$ to denote
the gain of player at time $t$.

\paragraph{Regret.} The stopping time $T$ is known to both the algorithm and
the adversary. If the adversary chooses $\gainsht[T]$ and the player plays
$\alg$, the regret is given by the expression:

\be
\regretf(\gainsht[T],\alg) = \max_{i\in[k]}\Gainit[i][T]-\sum_{t=1}^{T}
\Ex{\algt\InParentheses{\gainsht}}.
\label{eq:regret_finite}
\ee
If the adversary uses a distribution $\dist$, the regret is given by
$\regretf(\dist,\alg)=\Exlong[\gains_{[0,T]}\sim\dist]{\regretf(\gainsht[T],\alg)}$.

\paragraph{Minimax regret.} The worst-case regret a player playing $\alg$
could experience is $\sup_{\dist}\regretf(\dist,\alg)$. Hence a robust
guarantee on the player's regret would be to optimize over $\alg$ for
worst-case regret, namely, $\inf_{\alg}\sup_{\dist}\regretf(\dist,\alg)$. This
is also referred to as the player's minimax regret as he tries to minimize his
maximum regret.

\paragraph{Binary adversary.} It turns out that an adversary that sets gains in
$\{0,1\}$ (that we call as a binary adversary) is as powerful as an adversary
that sets gains in $[0,1]$ (much like Theorem 10 in Luo and Schapire~\cite{LS14}). Formally, let $\dist^{[0,1]}$ be an arbitrary
adversary distribution with gains in $[0,1]$ and let $\dist^{\{0,1\}}$ be an
arbitrary adversary distribution with gains in $\{0,1\}$. Basically, we show that
$\inf_{\alg}\sup_{\dist^{\{0,1\}}}\regretf(\dist^{\{0,1\}},\alg) =
\inf_{\alg}\sup_{\dist^{[0,1]}}\regretf(\dist^{[0,1]},\alg)$ (see Claim~\ref{cl:binary} in Appendix~\ref{app:prelim}). From now on,
without loss of generality, we focus only on binary adversaries.

\paragraph{Minimax theorem.}
Our setting is naturally seen as a two player zero-sum game between the
player and the adversary. The player and the adversary, though online in nature, can be
described entirely upfront, i.e., by describing their (randomized) actions for
every possible history. The set of deterministic strategies for the player is a (huge) finite set, and hence the set of player's randomized strategies is a (huge) simplex.
Similarly, the set of adversary's randomized strategies is a (huge) simplex. The regret
function is a bilinear function in the player's and adversary's strategies. Thus, the
$\inf$ and $\sup$ can be replaced by $\min$ and $\max$, and the famous minimax theorem due to von Neumann~\cite{vN28} applies, telling us that the minimax
regret of the game is given by
\be
\min_{\alg}\max_{\dist} \InBrackets{\regretf(\dist,\alg)}=
\max_{\dist}\min_{\alg} \InBrackets{\regretf(\dist,\alg)}.
\label{eq:minimax}
\ee

We refer to the optimal algorithm that defines the LHS as the {\em minimax 
optimal algorithm} and similarly, the optimal adversary that defines the RHS as 
the {\em minimax optimal} adversary. The minimax optimal algorithm $\alg^*$ and 
the minimax optimal adversary $\dist^*$ form a Nash equilibrium: that is, they 
are
mutual best responses.

\paragraph{Balanced adversary.} We show that the minimax optimal adversary can, 
without loss of generality, be ``balanced'' (see Claim~\ref{cl:arm_balanced} in 
Appendix~\ref{app:prelim}). In other words,
for every time $t$, irrespective of what the history $\gainsht$ is, the minimax optimal adversary can pick $\dist_t$ such that $\Exlong[\dist_{t}(\gainsht)]{\gainit}$ is the same for each expert $i$. I.e., the expected gains of all experts are equal at every step, irrespective of history.

\paragraph{Dependence on cumulative gains.}

The minimax optimal algorithm can also choose the distribution $\algt$ at time $t$, based only on $\Gainst[t-1]$ instead of $\gainsht$. Henceforth, we focus on such algorithms and adversaries, and denote the time $t$ distributions by $\algt(\Gainst[t-1])$ and $\dist_t(\Gainst[t-1])$ respectively.

\begin{claim}
For any balanced adversary $\dist$ all algorithms will result in the same regret for the player. In particular, focusing on the algorithm that chooses an expert uniformly at random at every time $t$, the regret inflicted by $\dist$ is given by
\begin{align*}
\regretf(\dist,\alg) &= \regretf(\dist)\\
& = \Exlong[\gains_{[0,T]}\sim\dist]{
\max_{i\in[k]}\Gainit[i][T]-\frac{\sum_{i\in[k]}\Gainit[i][T]}{k}
}.
\end{align*}
Given that a minimax optimal adversary $\dist$ can always be balanced, the minimax optimal regret is given by $\regretf(\dist)=\Exlong[\gains_{[0,T]}\sim\dist]{\max_{i\in[k]}\Gainit[i][T]-\frac{\sum_{i\in[k]}\Gainit[i][T]}{k}}$.
\label{cl:alg_balanced}
\end{claim}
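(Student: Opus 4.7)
The plan is to exploit the balanced condition together with the tower property of expectation in a single-step calculation, then sum over time. The only non-obvious piece is to choose the right conditioning so that the algorithm's choice factors out.

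First, fix an arbitrary (oblivious) algorithm $\alg$ and a balanced adversary $\dist$. For each time $t$ and history $\gainsht$, the balanced property tells us that the conditional expectation $\mu_t(\gainsht) := \Ex[\dist]{\gainit \mid \gainsht}$ does not depend on $i \in [k]$. Conditioning on $\gainsht$ (note that $\algt(\gainsht)$ is fully determined by $\gainsht$), the algorithm's expected gain in step $t$ is
\[
\sum_{i\in[k]} \algt(\gainsht)(i)\cdot \Ex[\dist]{\gainit \mid \gainsht}
= \mu_t(\gainsht) \sum_{i\in[k]} \algt(\gainsht)(i) = \mu_t(\gainsht),
\]
where the last equality uses that $\algt(\gainsht)$ is a probability distribution. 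Crucially, this value does not depend on $\alg$.

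Next, take a total expectation over $\gainsht$ and sum over $t$: the player's total expected gain equals $\sum_{t=1}^T \Ex[\gainsht\sim \dist]{\mu_t(\gainsht)}$, a quantity determined entirely by $\dist$. Since the first term $\Ex{\max_{i}\Gainit[i][T]}$ in the regret \eqref{eq:regret_finite} also depends only on $\dist$, we conclude $\regretf(\dist,\alg)$ is the same for every $\alg$. This proves the first sentence of the claim.

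To obtain the closed-form expression, instantiate $\alg$ as the uniform algorithm $\algt(\gainsht)(i) = 1/k$ for all $i$ and $\gainsht$. Its expected gain at step $t$, unconditionally, is
\[
\Ex[\dist]{\tfrac{1}{k}\sum_{i\in[k]} \gainit} = \tfrac{1}{k}\sum_{i\in[k]}\Ex[\dist]{\gainit}.
\]
Summing over $t$ and applying linearity of expectation yields a total expected gain of $\tfrac{1}{k}\Ex[\gains_{[0,T]}\sim\dist]{\sum_{i\in[k]}\Gainit[i][T]}$. Plugging into \eqref{eq:regret_finite} gives the stated formula. The final sentence follows immediately: by the earlier Claim~\ref{cl:arm_balanced} a minimax optimal adversary can be taken to be balanced, so the minimax optimal regret is realized as $\regretf(\dist^*)$ with this same expression.

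There is no real obstacle here; the content of the claim is that the balanced condition precisely cancels the algorithm's choice inside each conditional expectation, and the rest is linearity of expectation. The only subtlety to be careful about is the conditioning: one must condition on the history $\gainsht$ (not marginalize first), so that both $\algt(\gainsht)$ is deterministic and the balanced condition on $\Ex{\gainit\mid\gainsht}$ applies simultaneously.
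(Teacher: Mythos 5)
Your proof is correct and fills in, in the natural way, a verification that the paper leaves implicit (Claim~\ref{cl:alg_balanced} is stated without proof, presumably as a direct consequence of the balanced condition together with linearity). The key steps—conditioning on the history $\gainsht$ so that the algorithm's distribution $\algt(\gainsht)$ is deterministic, then factoring out the common value $\mu_t(\gainsht)$ from the inner sum over experts—are exactly the right ones, and you correctly note that this is where the balanced hypothesis enters. The final reduction to the closed-form expression by picking the uniform algorithm and the appeal to Claim~\ref{cl:arm_balanced} are both as intended.
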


\paragraph{Geometric horizon.} We introduce the (almost identical) notation that we use for the geometric horizon setting in Section~\ref{sec:delta}.

\section{Finite horizon}
\label{sec:finite}

\paragraph{Two experts: optimal adversary and regret.}
The optimal regret in the finite horizon setting for the case of $k=2$ was derived by~\citet{Cover65}, showing that as $T\to\infty$, the optimal regret approaches $\sqrt{\frac{T}{2\pi}}$. While~\citeauthor{Cover65} also gave an expression (involving a sum and binomial coefficients) for the algorithm's probabilities, getting just the optimal adversary and the optimal regret value of $\sqrt{\frac{T}{2\pi}}$ is simpler. We begin by rewriting the expression for the regret from Claim~\ref{cl:alg_balanced} for the case of two experts.
\begin{align}
\regretf(\dist) &= \Exlong[\gains_{[0,T]}\sim\dist]{
\max_{i=1,2}\Gainit[i][T]-\frac{\Gainit[1][T]+\Gainit[2][T]}{2}
}\nonumber\\
&= \Exlong[\gains_{[0,T]}\sim\dist]{
\frac{|\Gainit[1][T]-\Gainit[2][T]|}{2}
}\nonumber\\
&= \frac{1}{2}\Exlong[\gains_{[0,T]}\sim\dist]{\Big\lvert\sum_{t=1}^{T}(\gainit[1][t]-\gainit[2][t])\Big\rvert}
\label{eq:2_experts}
\end{align}
The adversary's optimization problem now is to construct these $\gainit$'s such
that they maximize the RHS of equation~\eqref{eq:2_experts}, subject to being
balanced. This problem is equivalent to the problem of designing a
one-dimensional random walk, that respects the constraint that the probability
of jumping one step left and one step right are the same, and maximizes the
absolute distance from the origin. This equivalence is obtained by interpreting
$\gainit[1][t] - \gainit[2][t]$ as the random-walk variable (which can take
values of $-1,1$ and $0$), and the condition of being balanced translates to
the constraint of jumping left and right with equal probability. We emphasize
that the adversary has a control over this random walk, i.e., he can decide
separately on the probabilities of jumping left or right at every time step $t$
and every vector of gains $\Gainst$ with the restriction to be unbiased
towards jumping left or right. Being balanced means that the only design choice
left is the probability of staying still (not jumping left or right). To
maximize the absolute distance from the origin, the latter probability has to
be zero.  Indeed, the adversary may as well postpone all his ``staying still''
turns until the deadline $T$. In such a case remaining still for the last few
steps is not better in expectation than doing random walk. {\em Thus, the optimal
adversarial strategy in the 2 experts case is: at every step, choose an expert
uniformly at random (with probability 1/2) and set him to $1$, and the other
expert to $0$.}

Given the optimal adversarial strategy description above, the optimal regret in 
the finite horizon model with $T$ steps is exactly half the expected distance 
travelled by a simple random walk in $T$ steps, which approaches $\sqrt{\frac{T}{2\pi}}$ 
as $T \to \infty$. Thus, $\regretf(\dist) \to \sqrt{\frac{T}{2\pi}},\text{ when } T \to\infty$.

\paragraph{Two experts: optimal probability matching algorithm.} 
It turns out that the optimal algorithm is precisely a probability matching 
algorithm, i.e, the algorithm picks each expert with the probability that the 
respective expert finishes in the lead (we break possible ties in favor of the 
unique expert who does not have any expert ahead of him in each of the last two 
steps).

We derive this from an explicit correspondence between simple random walk and the minimax 
regret value of games with any given initial configuration of expert cumulative gains.  
The formal argument is given in Appendix~\ref{app:pmatching}. The probability matching 
interpretation allows us to give the following simple and explicit description of the 
optimal algorithm for two experts in the finite horizon model.

\begin{theorem}
\label{thm:2pmatching}
Let $k$ be the number of remaining time steps and let $X$ be a random variable
with a binomial distribution $\binomdist$, when $k$ is odd and $\binomdist[k-1]$, when $k$ is even. 
The optimal algorithm computes the difference $d (\ge 0)$ of cumulative gains
between the leading and lagging expert and chooses them with probabilities
$p_1(d) = \Prx{X-\Ex{X}<d}$, and $p_2(d) = \Prx{X-\Ex{X}>d}$.
\end{theorem}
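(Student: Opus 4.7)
The plan is to prove Theorem~\ref{thm:2pmatching} in two stages: first, establish that the minimax optimal algorithm coincides with probability matching under SRW continuation; second, compute that matching probability in closed form using the binomial distribution of the walk.

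For the first stage, I would let $V(d,k)$ denote the minimax value of the subgame with leader-vs-lagger difference $d \ge 0$ and $k$ remaining steps. By Claim~\ref{cl:alg_balanced} applied to the SRW adversary, $V(d,k) = \tfrac{1}{2}\Ex{|d + W_k|}$, where $W_k$ is a $k$-step symmetric $\pm 1$ random walk. Since every algorithm achieves $V(d,k)$ in expectation against the balanced SRW adversary, the minimax optimum is distinguished not by its performance against SRW itself, but by the Nash-equilibrium property that SRW is a best-response adversary against it. I would certify this by backward induction on $k$: write the one-step Bellman recursion for $V(d,k)$ in terms of $V(d \pm 1, k-1)$, and show that the minimizer in the algorithm's choice $p$ for which the adversary's best response is the SRW distribution is exactly the probability-matching value. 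The key structural ingredients are convexity of $V(\cdot,k)$ in $d$ (preserved through the recursion by the $|\cdot|$-averaging formula) and the reflection symmetry of the SRW about $d=0$. The $d=0$ boundary, where the leader can flip between consecutive steps, requires careful treatment; the tiebreak rule (``unique expert not behind in the last two steps'') is designed to harmonize with the walk's reflection at zero.

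For the second stage, under SRW continuation from state $(d,k)$ the final leader-lagger difference equals $d + W_k = d + 2X - k$ with $X\sim\binomdist$. Expert $1$ is the final leader iff $d + W_k > 0$ in the generic case, with $d + W_k = 0$ (possible only when $d+k$ is even) resolved in favor of expert $1$ precisely when the walk reached zero from $+1$ at its last step. A short computation conditioning on the walk's last step shows that this tiebreak exactly absorbs the tie case into the strict-inequality count, collapsing to $p_1(d) = \Prx{X - \Ex{X} < d}$ for $k$ odd. For $k$ even, marginalizing the final coin flip of the walk reduces the underlying binomial to $\binomdist[k-1]$, giving the stated formula with $k-1$ trials; by symmetry one obtains the companion expression for $p_2(d)$.

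The main obstacle is the first stage: certifying that probability matching is the minimax optimum. Because all algorithms perform equally against SRW, optimality must be established by ruling out beneficial deviations for the adversary away from SRW. The convexity of $V(\cdot,k)$, the symmetry of the walk, and careful handling of the $d=0$ boundary together with the tiebreak rule should combine to close the induction. The second-stage probability calculation is then comparatively routine once the probability-matching characterization is in hand.
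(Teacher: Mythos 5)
Your two--stage plan matches the paper's high-level decomposition---identify the value function as a random-walk quantity, then read off the probability-matching probabilities---but it diverges at exactly the point where the paper does its real work, and neither of your two stages is actually carried out.

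First, a local but telling error: you write $V(d,k)=\tfrac12\Ex{|d+W_k|}$, but this gives $V(d,0)=d/2\neq 0$. The correct identification (the paper's $g$) is $V(d,k)=\tfrac12\bigl(\Ex{|d+W_k|}-d\bigr)$; without the $-d$ term the function does not satisfy the correct boundary condition $V(\cdot,0)\equiv 0$, so the uniqueness argument that pins down the value function fails at the outset.

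More importantly, the paper does not certify optimality by postulating probability matching and then arguing SRW is a best response via convexity and reflection symmetry, as you propose. That route would require verifying, state by state, the full family of Bellman inequalities including the $d=0$ boundary, which you only gesture at. What the paper actually does is: (i) identify $f=g$ by uniqueness of the recursion, (ii) derive the algorithm's probability from the adversary's indifference condition, $p_2(x,\ell)=f(x+1,\ell-1)-f(x,\ell)$, exactly as in the geometric-horizon case, and then (iii) evaluate this difference by \emph{coupling} $\srw(x+1,\ell-1)$ with $\srw(x,\ell)$ on their first $\ell-1$ shared steps. Conditioning on $\srw(x+1,\ell-1)=y$ and letting $\srw(x,\ell)$ take one more step from $y-1$ gives $g(x+1,\ell-1)-g(x,\ell)=\tfrac12\bigl(\tfrac12(|y|-|y-2|)+1\bigr)$, which evaluates to $0$, $1/2$, or $1$ according to whether $y\leq 0$, $y=1$, or $y\geq 2$. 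This coupling is the theorem's engine: it is what turns the Bellman difference into the probability that the lagger finishes in the lead \emph{with the specific tiebreak at $y=1$}. Your proposal never introduces this coupling, and without it you have no mechanism to translate the indifference equations into a probability statement.

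Finally, your Stage~2 is more delicate than you acknowledge. Saying ``expert~1 is the final leader iff $d+W_k>0$, with ties resolved by the last step, and a short computation absorbs the tie case'' glosses over the fact that, under the coupling, the natural closed form for $p_2$ is $\Prx{W_{\ell-1}>d}+\tfrac12\Prx{W_{\ell-1}=d}$---an expression on the $(\ell-1)$-step walk with a half-weight on the atom at $d$, not a single strict-inequality event on the $\ell$-step walk. Collapsing this into the stated $\Prx{X-\Ex{X}>d}$ requires a parity and reflection bookkeeping step that you assert but do not perform. If you attempt that step carefully, you will find it is exactly where the choice of $\binomdist$ versus $\binomdist[k-1]$ and the nonstandard tiebreak rule interact; skipping it is where a blind attempt of this kind would most likely go wrong.
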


\paragraph{$k$ experts: optimal adversary and regret.}
The simplification afforded by the 2 experts case doesn't carry through for
arbitrary $k$. In Appendix~\ref{app:finite} we have a detailed technical description of the adversary's problem and useful observations about them, including the proofs of claims~\ref{cl:polytope} and~\ref{cl:finiteVertices} below.

\begin{claim}
For each time step $t$, the set of all possible distributions
$\dist_{t}(\Gainst[t-1])$ for a balanced adversary forms a convex polytope in
$2^k$-dimensional space.
\label{cl:polytope}
\end{claim}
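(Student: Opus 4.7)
The plan is to identify the set of balanced distributions with the intersection of the probability simplex on $\{0,1\}^k$ and a small set of affine hyperplanes, and then invoke the standard fact that any intersection of a polytope with finitely many affine constraints is a polytope.

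First I would fix coordinates. Since by the binary-adversary reduction we may assume $\gainst\in\{0,1\}^k$, the distribution $\dist_t(\Gainst[t-1])$ is a probability distribution over the finite set $\{0,1\}^k$ of size $2^k$. Writing $p_v$ for the probability assigned to each vector $v\in\{0,1\}^k$, the vector $\vect{p}=(p_v)_{v\in\{0,1\}^k}$ lives in $\R^{2^k}$, and the constraints $p_v\ge 0$ for each $v$ together with $\sum_v p_v=1$ cut out the standard probability simplex $\Delta_{2^k}\subset \R^{2^k}$, which is a convex polytope.

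Next I would translate the balanced condition into linear constraints on $\vect{p}$. For each expert $i\in[k]$,
\[
\Exlong[\gainst\sim\dist_t]{\gainit} \;=\; \sum_{v\in\{0,1\}^k} v_i\, p_v,
\]
which is a linear functional of $\vect{p}$. Being balanced means $\Exlong[\dist_t]{\gainit[1]}=\Exlong[\dist_t]{\gainit[i]}$ for every $i\in\{2,\dots,k\}$, i.e.,
\[
\sum_{v\in\{0,1\}^k}(v_1-v_i)\,p_v \;=\; 0 \qquad\text{for } i=2,\dots,k.
\]
These are $k-1$ affine (in fact linear) equality constraints on $\vect{p}$.

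The set of balanced distributions is therefore the intersection of $\Delta_{2^k}$ with these $k-1$ hyperplanes. Since the intersection of a convex polytope with an affine subspace is again a convex polytope (bounded, because already contained in $\Delta_{2^k}$, and defined by finitely many linear inequalities and equalities), the claim follows. There is no genuine obstacle here; the only thing to be careful about is to state the ambient space correctly as $\R^{2^k}$ (one coordinate per element of $\{0,1\}^k$) and to verify that the balanced conditions are linear, which is immediate from the linearity of expectation.
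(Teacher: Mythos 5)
Your proof is correct and follows essentially the same route as the paper's: both observe that the feasible set is carved out by finitely many linear equalities and inequalities inside a bounded region, hence is a convex polytope. Your version is simply more explicit, writing out the probability-simplex constraints and the $k-1$ balancedness equalities in coordinates, which the paper leaves implicit.
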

\begin{claim}
There is a fixed finite set of distributions (over $2^k$ actions) such that at
every time step $t$ and every previous history, the minimax optimal adversary can always choose a distribution
from this set.
\label{cl:finiteVertices}
\end{claim}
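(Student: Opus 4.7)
The plan is to combine Claim~\ref{cl:polytope} with a standard linear programming observation and a backward-induction (Bellman) formulation of the adversary's problem. Let $\mathcal{P} \subset \mathbb{R}^{2^k}$ denote the convex polytope of balanced distributions over gain vectors in $\{0,1\}^k$. The linear constraints defining $\mathcal{P}$ --- non-negativity, summing to $1$, and equality of expected gain across experts --- involve only the distribution $\dist_t$ itself and not the past history $\Gainst[t-1]$. Hence $\mathcal{P}$ is \emph{the same} polytope at every time step and for every history. Being a bounded polyhedron in finite-dimensional space, $\mathcal{P}$ has a finite vertex set $V = \{v_1, \ldots, v_L\}$; this $V$ will be the finite set promised by the claim.

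Next, I would invoke Claim~\ref{cl:alg_balanced}, which states that when the adversary is balanced, every algorithm yields the same regret. This allows me to define a value-to-go function $U_t$ depending only on the current cumulative gain vector, namely
\[
U_t(\Gains) \;=\; \max \; \Exlong[\dist_t,\ldots,\dist_{T-1}]{\max_{i\in[k]}\Gainit[i][T] - \frac{1}{k}\sum_{i\in[k]}\Gainit[i][T] \;\Big|\; \Gainst[t-1] = \Gains},
\]
where the maximum is over sequences of balanced distributions at times $t, t+1, \ldots, T-1$. Standard dynamic programming then gives the Bellman recursion
\[
U_t(\Gains) \;=\; \max_{\dist_t \in \mathcal{P}} \; \Exlong[\gains_t \sim \dist_t]{U_{t+1}(\Gains + \gains_t)}.
\]

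The crucial observation is that the right-hand side is a \emph{linear} function of $\dist_t$ viewed as a point in $\mathbb{R}^{2^k}$: expectation against a fixed function on $2^k$ atoms is linear in the underlying distribution. A linear function on a convex polytope attains its maximum at some vertex. Hence, for every $t$ and every history, there exists some $v \in V$ that is an optimal choice for $\dist_t$, and by backward induction on $t$ one obtains a minimax optimal adversary whose choices lie entirely in $V$. Since $V$ is finite and independent of $t$ and of history, this is exactly the conclusion of the claim.

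The main potential obstacle --- that the adversary's value-to-go must be well defined without reference to the algorithm, since $U_t$ is only meaningful if the player's choices do not alter the expected regret --- is already dispatched by Claim~\ref{cl:alg_balanced}. Beyond that, the proof is a routine composition of vertex optimality of linear objectives on polytopes with backward induction; the only subtle choice is to fix $V$ \emph{before} the induction, which is legitimate because $\mathcal{P}$ is history-independent.
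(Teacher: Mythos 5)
Your proof is correct and takes essentially the same route as the paper's: both rely on the fact that the balanced polytope $\mathcal{P}$ is history- and time-independent, that the adversary's objective is linear in each $\dist_t$, and that a linear function on a polytope is maximized at a vertex. Your version simply makes the paper's terse "weakly dominant strategy" claim precise by writing out the Bellman recursion (licensed by Claim~\ref{cl:alg_balanced}), which is a useful elaboration but not a different argument.
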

\paragraph{$k$ experts: optimal algorithm.}
We refer the reader to appendix~\ref{app:finite} for an expanded version of the 
discussion in this subsection. The main algorithmic question in the finite 
horizon case is if there is a simple description of the optimal algorithm. For 
the case of $k=2$ experts, we show that the answer is yes: the optimal 
algorithm is a simple probability matching algorithm. I.e., the optimal 
algorithm follows expert $i$ with the probability that, given the current 
cumulative gains of both the experts and the number of remaining steps, expert 
$i$ will finish as the leading expert. The derivation of this optimal algorithm 
is related to how we derive the optimal algorithm for the geometric horizon 
case. So we do this in Appendix~\ref{app:2experts} along with the derivation 
for geometric horizon.

\section{Geometric horizon}
\label{sec:delta}
\paragraph{Minimax theorem for the geometric horizon model.} We use the same 
notation
for the geometric horizon model and the finite horizon model except that we use $\regretd$ for
regret in the geometric model instead of $\regretf$. Our setting in the
geometric model is again a two player zero-sum game between the player and
the adversary, although the game is not finite now. But a slight generalization
of von Neumann's minimax theorem guarantees that the minimax relation we need is still true. For any bilinear function $f(x,y)$ defined on $\mathcal{X}\times\mathcal{Y}$, where $\mathcal{X}$ and $\mathcal{Y}$ are convex and compact sets\footnote{the space of pure strategies for the adversary consists of all infinite sequences of vector gains $\{v_1,v_2,\dots\}$, where each $v_t \in [0,1]^k$. This space is compact, for example, in a normed space $\twonorm{\{v_t\}_{t=1}^{\infty}} = \sum_t \twonorm{v_t}^2 / t^2$. Note that in such a normed space the regret is still a continuous function of the sequence $\{v_t\}_{t=1}^{\infty}$: this is because the geometric-infinite horizon results in a discount of $(1-\delta)^t$ for round t’s utilities, and this decays much faster than $1/t^2$.}, we have $\inf_{x\in\mathcal{X}}\sup_{y\in\mathcal{Y}}f(x,y) = \sup_{y\in\mathcal{Y}}\inf_{x\in\mathcal{X}}f(x,y)$.  In our case, the space of strategies of the adversary and the algorithm can be easily shown to be convex compact sets. Thus, we have the expected minimax regret of the game given by: 

\be
\inf_{\alg}\sup_{\dist} \InBrackets{\regretd(\dist,\alg)}=
\sup_{\dist}\inf_{\alg} \InBrackets{\regretd(\dist,\alg)}.
\label{eq:delta_minimax}
\ee

\paragraph{Preliminary claims on the geometric horizon model.}
 All the claims for the finite horizon model, namely, 
Claims~\ref{cl:binary},~\ref{cl:arm_balanced},~\ref{cl:alg_balanced},~\ref{cl:polytope}
and~\ref{cl:finiteVertices}, also carry over to the geometric horizon model with
appropriate modifications. We state the modified claims as 
Claim~\ref{cl:deltaAll} in Appendix~\ref{app:delta}.

We now derive the optimal adversary, regret and algorithm for the case of $2$ experts in appendix~\ref{app:2experts}. We state our results here.

\begin{theorem}
\label{thm:2opt}
In the geometric horizon model for $2$ experts with parameter $\delta \in (0,1)$:
\begin{enumerate}
\item The optimal adversary, at every time step, advances the leading expert alone with probability
$\frac{1}{2}$ and lagging expert alone with probability $\frac{1}{2}$.
\item The optimal regret is $\frac{1-\delta}{2\sqrt{1-(1-\delta)^2}} \to
\frac{1}{2}\frac{1}{\sqrt{2\delta}}$ as $\delta \to 0$. 
\item The optimal algorithm, at every time step, computes the difference $d
(\geq 0)$ of cumulative gains
between the leading and lagging expert, and chooses them with probabilities
$p_1(d) = 1-\frac{1}{2}\xi^d$, and $p_2(d) = \frac{1}{2}\xi^d$. Here $\xi = \frac{1-\sqrt{1-(1-\delta)^2}}{1-\delta} \sim 1-\sqrt{2\delta}$. 
\end{enumerate}
\end{theorem}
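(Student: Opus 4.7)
The plan has three stages: reduce the adversary's problem to a controlled one-dimensional random walk on the gap $d_t = |\Gainit[1][t]-\Gainit[2][t]|$, solve the resulting Bellman equation in closed form, and certify the proposed probability-matching algorithm against every adversary via a potential-function argument.

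\textbf{Optimal adversary and regret.} By the geometric-horizon analogue of Claim~\ref{cl:alg_balanced} (namely Claim~\ref{cl:deltaAll}), we may restrict attention to balanced binary adversaries; against any such $\dist$, every algorithm incurs regret $\Exlong[\gains\sim\dist]{|d_T|/2}$. The balanced distributions on $\{0,1\}^2$ are convex combinations of ``still'' actions $\{(0,0),(1,1)\}$ (which leave $d$ unchanged) and ``walk'' actions $\{(0,1),(1,0)\}$ (which change $d$ by $\pm 1$ equiprobably). Letting $U^*(d)$ denote the optimal future expected regret from gap $d$, the Bellman recursion for $d\ge 1$ is
\[
U^*(d) = \tfrac{\delta d}{2} + (1-\delta)\max_{q\in[0,\frac12]}\!\bigl[(1-2q)U^*(d) + q\bigl(U^*(d+1)+U^*(d-1)\bigr)\bigr],
\]
with $U^*(0) = (1-\delta)U^*(1)$. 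Strict convexity of $U^*$ (verified a posteriori from the closed form) forces $q=\tfrac12$: the adversary always walks, matching the intuition that ``still'' moves waste a geometric clock. The ansatz $U^*(d) = d/2 + b\xi^d$ reduces the recursion to $\xi^2 - 2\xi/(1-\delta) + 1 = 0$, whose smaller root is $\xi = (1-\sqrt{1-(1-\delta)^2})/(1-\delta)$; the $d=0$ boundary then determines $b = (1-\delta)/(2\sqrt{1-(1-\delta)^2})$, so $U^*(0) = b \sim \tfrac{1}{2\sqrt{2\delta}}$, establishing parts~(1) and~(2).

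\textbf{Upper bound via probability matching.} For part~(3) I would show that the algorithm from the theorem statement attains regret at most $U^*(0)$ against every adversary. Let $\bar G$ be the algorithm's cumulative gain and set the potential $\Phi = \max(\Gainit[1],\Gainit[2]) - \bar G + b\xi^d$. It suffices to verify, for every pure adversary action $(g_1,g_2)\in\{0,1\}^2$, the one-step Bellman inequality
\[
\delta\bigl[\max(\Gainit[1],\Gainit[2]) - \bar G\bigr] + (1-\delta)\,\Ex{\Phi'} \;\le\; \Phi,
\]
where $\Phi'$ is the post-step potential and the expectation is over the algorithm's random choice. For the two still actions this reduces to $0 \le \delta b\xi^d$; for the walk actions $(1,0)$ and $(0,1)$, substituting the algorithm's probabilities $1-\xi^d/2$ and $\xi^d/2$ and canceling reduces the two inequalities to the algebraic identities $b(\xi-1)+\tfrac12 = \tfrac{\delta}{2\sigma}$ and $b(1-\xi)-\tfrac{\xi}{2} = \tfrac{\delta(1-\sigma)}{2(1-\delta)\sigma}$, where $\sigma = \sqrt{1-(1-\delta)^2}$. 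Both identities hold with equality at the values of $b$ and $\xi$ from Step~1. Telescoping from the initial state $(0,0,0)$ gives $\Ex{\regretd} \le \Phi_0 = b$, matching the lower bound.

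\textbf{Main obstacle and probability-matching interpretation.} The delicate point is that the Bellman inequality must hold with equality on both walk actions simultaneously under the prescribed algorithm; this double equality is precisely the compatibility condition that recovers $b$ and $\xi$, and it also certifies that the walk actions are adversary best responses — so that algorithm and adversary form a mutual best response and are jointly minimax optimal. A conceptual corollary and alternative derivation is that $\xi^d/2$ is the unique bounded solution of the random-walk harmonic equation on $\{d\ge 1\}$ with boundary value $\tfrac12$ at $d=0$, which is exactly the probability that the lagging expert finishes as leader under the simple random walk adversary (with ties broken symmetrically); thus the algorithm performs probability matching against the optimal adversary's final outcome.
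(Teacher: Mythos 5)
Your proposal is correct and reaches the same recurrence, characteristic polynomial $\xi^2 - \tfrac{2}{1-\delta}\xi + 1 = 0$, and closed form as the paper (your $U^*(d) = d/2 + b\xi^d$ differs from the paper's $f(x)$ only by the additive constant $d/2$ arising from your choice to absorb the already-accrued regret into the value function, and your $b,\xi$ agree with the paper's $c_1,\xi_2$). Two small presentational departures are worth noting. First, the paper determines the optimal adversary for geometric horizon by importing Cover's finite-horizon result (the finite-horizon adversary is horizon-independent, so it remains optimal under a random stopping time); you instead derive the adversary self-containedly, writing a one-parameter Bellman equation over the adversary's ``walk probability'' $q$ and invoking strict convexity of $U^*$ to force $q=\tfrac12$. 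This convexity argument is clean and is, in fact, closer in spirit to how the paper handles the $3$-expert case (a full guess-and-verify). Second, your potential-function certification of the upper bound makes explicit the verification that the paper states for $k=3$ (Theorem~\ref{th:correctness_regret}) but treats lightly for $k=2$ (where it relies on the minimax equilibrium/indifference principle after eq.~\eqref{eq:max2}). Your two algebraic identities for the walk actions are correct with $\sigma=\sqrt{1-(1-\delta)^2}$, and the still-action check $0\le\delta b\xi^d$ is right. One minor gap you gloss over: you should state that the value function of the geometric-horizon game is the unique bounded solution of the Bellman equation (a standard $(1-\delta)$-contraction argument), since otherwise ``guess and verify'' only yields that your $U^*$ \emph{is a} solution, not \emph{the} one — though the paper is equally terse on this point.
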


We derive the optimal adversary, regret and algorithm for the case of $3$ experts in appendix~\ref{app:3experts}. This derivation is significantly more involved for $k=3$ when compared to $k=2$. We
state our results here.
\begin{theorem}
\label{thm:3opt}
In the geometric horizon model for $3$ experts with parameter $\delta \in (0,1)$:
\begin{enumerate}
\item The optimal regret is $\frac{2}{3}\frac{1-\delta}{\sqrt{1-(1-\delta)^2}} \to
\frac{2}{3}\frac{1}{\sqrt{2\delta}}$ as $\delta \to 0$.
\item The optimal algorithm, at every time step, computes the differences $d_{ij}$
between the cumulative gains of experts ($i$ denotes the expert with $i$th largest cumulative gains, and hence $d_{ij}\geq 0$ for all $i < j$). As a function of the $d_{ij}$'s the algorithm
follows the leading expert with probability $p_1(\mathbf{d}) = 1-\frac{\xi^{d_{12}}}{2} - \frac{\xi^{d_{13}+d_{23}}}{6}$,
the second expert with probability
$p_2(\mathbf{d}) = \frac{\xi^{d_{12}}}{2} - \frac{\xi^{d_{13}+d_{23}}}{6}$, and the
lagging expert with probability $p_3(\mathbf{d}) = \frac{\xi^{d_{13}+d_{23}}}{3}$. Here $\xi = \frac{1-\sqrt{1-(1-\delta)^2}}{1-\delta} \sim \sqrt{2\delta}$.
\item The optimal adversary, at every time step, computes the differences $d_{ij}$'s, and follows the following strategy as a function of the $d_{ij}$'s. Here strategy $\{1\}\{2\}\{3\}$ means exclusively advancing
expert $1$ (leading expert) with probability $1/3$, expert $2$ (middle expert) with probability $\frac{1}{3}$ and expert
$3$ (lagging expert) with probability $\frac{1}{3}$. Strategy $\{1\}\{23\}$ means advancing expert $1$
alone with probability $\frac{1}{2}$ and experts $2$ and $3$ together with
probability $\frac{1}{2}$.
\begin{description}
\item[$0 < d_{12} < d_{13}:$]  $\{1\}\{23\}$ (any mixture of $\{1\}\{23\}$ with $\{13\}\{2\}$ would also work).
\item[$ 0 = d_{12} < d_{13}:$]  $\{1\}\{23\}$ (any mixture of $\{1\}\{23\}$ with $\{13\}\{2\}$ would also work).
\item[$0 < d_{12} = d_{13}:$] $\{1\}\{23\}$ (any mixture of $\{1\}\{23\}$ with $\{1\}\{2\}\{3\}$ would also work).
\item[$0 = d_{12} = d_{13}:$] $\{1\}\{2\}\{3\}$.
\end{description}
\end{enumerate}
\end{theorem}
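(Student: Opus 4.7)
The plan is to verify, via a primal-dual argument, that the stated algorithm and adversary form a saddle point of the geometric-horizon game with value $V(0,0) = \frac{2(1-\delta)}{3\sqrt{1-(1-\delta)^2}}$. First I reduce the state space: by exchangeability of experts and translation invariance of the regret under $\mathbf{G}\mapsto\mathbf{G}+c\mathbf{1}$, the minimax value depends only on the ordered gaps $\mathbf{d}=(d_{12},d_{23})$. Combining Claim~\ref{cl:arm_balanced} (a minimax optimal adversary can be balanced) with Claim~\ref{cl:alg_balanced} (against any balanced adversary, every algorithm incurs regret $E[\max-\mathrm{avg}]$), the minimax regret equals the value function $V(\mathbf{d})$ of a pure adversary-side control problem satisfying the Bellman recursion
$$V(\mathbf{d}) \;=\; \delta\cdot\tfrac{2d_{12}+d_{23}}{3} \;+\; (1-\delta)\sup_{\text{balanced }\dist}E_{\dist}\!\left[V(\mathbf{d}')\right].$$

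Next I propose the ansatz $V(d_{12},d_{23}) = \tfrac{2d_{12}+d_{23}}{3} + A\,\xi^{d_{12}} + B\,\xi^{d_{12}+2d_{23}}$, where $\xi$ is the smaller root of $(1-\delta)(1+\xi^2)=2\xi$; the exponents $d_{12}$ and $d_{12}+2d_{23}=d_{13}+d_{23}$ are chosen to match the factors in the stated algorithm. I then verify this ansatz against the Bellman equation with the stated adversary in three regimes. In the interior $d_{12},d_{23}\ge 1$, the $\{1\}\{23\}$ strategy sends $d_{12}\to d_{12}\pm 1$ uniformly while leaving $d_{23}$ fixed, and the recursion collapses to the identity $(1-\delta)(\xi+\xi^{-1})/2 = 1$, i.e.\ the defining equation of $\xi$. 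On the boundary $d_{12}=0$, the $\{1\}\{23\}$ move either creates a strict leader or swaps the labels $1\leftrightarrow 2$ while decrementing $d_{23}$; matching the resulting coefficients of $\xi^{d_{12}}$ pins down $A=\tfrac{(1-\delta)/2}{\sqrt{1-(1-\delta)^2}}$. At the corner $d_{12}=d_{23}=0$, the $\{1\}\{2\}\{3\}$ move always transitions to $(1,0)$, which yields $A+B=\tfrac{2(1-\delta)/3}{\sqrt{1-(1-\delta)^2}}$ and thereby the claimed regret $V(0,0)=A+B$ together with $B=\tfrac{(1-\delta)/6}{\sqrt{1-(1-\delta)^2}}$.

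To upgrade this to a minimax statement I must show that the stated adversary is optimal among balanced distributions at every state. By Claim~\ref{cl:finiteVertices} it suffices to compare the finitely many vertex strategies of the balanced polytope: modulo the redundant $\emptyset$ and $\{1,2,3\}$ moves these are $\{i\}\{jk\}$ for $i\in\{1,2,3\}$, together with $\{1\}\{2\}\{3\}$ and its additive complement $\{1,2\}\{1,3\}\{2,3\}$. For each vertex I compute its transition kernel on $(d_{12},d_{23})$ — tracking label swaps at the boundaries $d_{12}=0$ and $d_{23}=0$ — and compare $E_{\dist}[V]$. The key observations are that $\{1\}\{23\}$ and $\{2\}\{13\}$ yield the same expected value (accounting for the indifference noted in the theorem), and the remaining vertices are strictly dominated in the relevant regimes by inequalities that follow from $\xi\in(0,1)$ and the specific positive values of $A$ and $B$.

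Finally I identify the optimal algorithm. The minimax theorem combined with Claim~\ref{cl:alg_balanced} guarantees the existence of an algorithm of value $V(0,0)$ against every adversary; to identify the probability matching form, I interpret $p_i$ as the probability that the current-label expert $i$ finishes as the unique leader under the random walk induced by the optimal adversary with geometric killing, and solve the resulting hitting-probability system. The two key hitting events factorize: the middle expert overtakes the leader with probability proportional to $\xi^{d_{12}}$, while the lagging expert climbs all the way to the lead with probability proportional to $\xi^{d_{13}+d_{23}}$, yielding the formulas for $p_1,p_2,p_3$ in part~(2). The main obstacle throughout is the case analysis in the preceding paragraph — enumerating every vertex of the balanced polytope and reconciling its boundary transitions (which produce label swaps absent from the smooth interior dynamics) with the closed-form ansatz; once this bookkeeping is handled, the remaining verifications are essentially algebraic consequences of the defining equation for $\xi$.
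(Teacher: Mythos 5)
Your reduction to the balanced-adversary control problem, the ansatz $V(\mathbf{d}) = \tfrac{2d_{12}+d_{23}}{3} + A\xi^{d_{12}} + B\xi^{d_{12}+2d_{23}}$, and the boundary/corner conditions that pin down $A$ and $B$ are all sound; after the shift $V = f + \tfrac{2d_{12}+d_{23}}{3}$ your coefficients are exactly the paper's $c_1 = \tfrac{1}{\xi_1-\xi_2}$ and $c_2 = \tfrac{1}{3(\xi_1-\xi_2)}$, and your use of the unnormalized value $\max - \mathrm{avg}$ rather than the paper's normalized $f$ is only a cosmetic change.

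However, there is a genuine gap on the algorithm side. Comparing $\{1\}\{23\}$ against the other \emph{balanced} vertex strategies certifies that $V$ is the maximin value and that the stated adversary is maximin optimal, but it does not establish that the stated $(p_1,p_2,p_3)$ is minimax optimal. For the saddle point you must also show that the stated adversary is a best response to the stated algorithm, and the adversary's best response to a \emph{fixed} algorithm ranges over all eight \emph{pure} gain assignments, including the unbalanced ones $\{12\}$ and $\{3\}$. Checking only the balanced mixture $\{3\}\{12\}$ (which you list among the vertices) bounds the \emph{average} of the two pure rewards; it does not preclude one pure action exceeding $V$ while the other falls short. The paper's interior lemma handles exactly this: $\{1\},\{23\},\{13\},\{2\}$ all tie with the left-hand side, while $\{12\}$ and $\{3\}$ require the separate inequalities $\xi_1^{2(x_1-x_2)} \ge \tfrac{2\xi_2+1}{3}$ and $\xi_1^{2(x_1-x_2)} \ge \tfrac{2\xi_1+1}{3}$, with three further lemmas tracking the label swaps on the boundaries. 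Deriving $p_i$ from the probability-matching interpretation produces a clean candidate but no proof that $\sup_{\mathcal D} R(\mathcal D, \alg^*) \le V$; you need the explicit pure-action Bellman check to close the equilibrium. A secondary omission: your displayed regime split covers the interior, $d_{12}=0$, and the corner, but not $d_{23}=0, d_{12}\ge 1$ (you allude to it via label-swap bookkeeping but do not verify it).
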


\paragraph{Interpretation as a probability matching algorithm.} We show that 
the optimal algorithms for $k=2,3$ can be interpreted as following each expert 
with the probability he finishes as the leader (probability matching) when 
following an optimal adversary. We prove this respectively in 
Appendix~\ref{app:2experts} and~\ref{app:3experts}.

\section{Connections to random walk}
\label{sec:brownian}
We already saw for the case of two experts that the optimal strategy for the
adversary has a direct connection to random walk. In this section we study
larger number of experts, and show that this connection is deep and extends to
nontrivial aspects of random walk. We state our results here and prove them (Theorems~\ref{thm:3brownian} and~\ref{thm:4brownian}) in the full version~\cite{GPS14}. 

\paragraph{Regret Lower Bounds for $k=3, 4$ experts.}
While we already have shown in Section~\ref{sec:delta} that the optimal
regret in the case of $3$ experts is $\frac{2}{3}\frac{1}{\sqrt{2\delta}}$ as
$\delta \to 0$, the adversary we used there was not the comb adversary. Here we derive the same regret
through the comb adversary. Next, we analyze the comb adversary for $k=4$ experts and show that as $\delta\to 0$ it inflicts a regret that is asymptotic to $\frac{\pi}{4}\frac{1}{\sqrt{2\delta}}$. 

\begin{theorem}
\label{thm:3brownian}
The regret inflicted by the adversary that advances experts 1 and 3 together with probability
$\frac{1}{2}$, and, expert $2$ with probability $\frac{1}{2}$, as $\delta \to 0$, is $\frac{2}{3}\frac{1}{\sqrt{2\delta}}$. 
\end{theorem}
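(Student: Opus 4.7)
The plan is to reduce the problem to an analysis of a two-dimensional reflected random walk on the order-statistic gaps, exploit a swap symmetry to reduce further to a one-dimensional quantity (the expected range of the three cumulative gains), and then evaluate using the same characteristic root that appeared in the two-experts analysis.

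First, since each expert's expected gain per step is $\tfrac{1}{2}$ the comb adversary is balanced, and the geometric-horizon version of Claim~\ref{cl:alg_balanced} gives $\regretd = \Ex[T]{(2 d_{12,T} + d_{23,T})/3}$, where $d_{12,t}=G_{(1),t}-G_{(2),t}$ and $d_{23,t}=G_{(2),t}-G_{(3),t}$ are the consecutive rank-gaps and $T$ is the geometric stopping time. I would then verify that $(d_{12,t},d_{23,t})$ evolves as a reflected random walk on $\Z_{\ge 0}^2$: in the interior the move advancing experts $\{1,3\}$ sends $(d_{12},d_{23})\to(d_{12}+1,d_{23}-1)$ while advancing $\{2\}$ sends it to $(d_{12}-1,d_{23}+1)$; on the boundaries $d_{12}=0$ or $d_{23}=0$, tie-breaking and re-ranking produce reflective moves that can increase $d_{12}+d_{23}$.

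Second, I would exploit the symmetry $(d_{12},d_{23})\leftrightarrow(d_{23},d_{12})$ combined with the swap of the two adversary moves (which preserves the $1/2$-coin distribution). This implies $\Ex{d_{12,T}}=\Ex{d_{23,T}}$, so $\regretd = \tfrac{1}{2}\Ex{d_{12,T}+d_{23,T}} = \tfrac{1}{2}\Ex{G_{(1),T}-G_{(3),T}}$, reducing the computation to the expected range of the three cumulative gains at the stopping time.

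Third, I would set up the Bellman recurrence for $f(a,b):=\Ex[{(a,b)}]{d_{12,T}+d_{23,T}}$. In the interior this reads $f(a,b)=\delta(a+b)+\tfrac{1-\delta}{2}\InBrackets{f(a+1,b-1)+f(a-1,b+1)}$, with inhomogeneous conditions on the boundaries coming from the reflections. Decomposing $f(a,b)=(a+b)+g(a,b)$, the correction $g$ is interior-harmonic with respect to the diagonal random walk, and an exponential ansatz $g\propto \xi^{|a-b|}$ yields the characteristic equation $\xi+\xi^{-1}=2/(1-\delta)$, whose smaller root is $\xi=\frac{1-\sqrt{1-(1-\delta)^2}}{1-\delta}\sim 1-\sqrt{2\delta}$—exactly the root that governed the two-experts case. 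Matching the boundary constants then yields $f(0,0)\sim\frac{4}{3\sqrt{2\delta}}$, and combining with the symmetry reduction gives $\regretd\sim\frac{2}{3\sqrt{2\delta}}$.

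The main obstacle is the two-dimensional boundary-value problem: the reflected walk behaves asymmetrically on the two boundaries (ties between leader/middle versus middle/lagger), so solving the recurrence directly requires careful bookkeeping of the reflection-induced inhomogeneities. A clean route is the anti-diagonal decomposition: the interior dynamics preserve $d_{12}+d_{23}$, so the walk lives on successive anti-diagonals and only boundary transitions advance between them, effectively reducing the analysis to a one-dimensional reflected walk that can be handled with the same characteristic-root machinery as the two-experts case.
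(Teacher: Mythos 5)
Your structural reduction is correct and matches the proof idea the paper sketches for Theorem~\ref{thm:3brownian}: the comb adversary is balanced, so by the geometric-horizon version of Claim~\ref{cl:alg_balanced} the regret equals $\Ex{(2d_{12,T}+d_{23,T})/3}$, and the swap symmetry of the gap process $(d_{12,t},d_{23,t})$ (the transition kernel is invariant under $(a,b)\mapsto(b,a)$ together with exchanging the two coin outcomes, and the start $(0,0)$ is symmetric) does give $\Ex{d_{12,T}}=\Ex{d_{23,T}}$ and hence $\regretd = \tfrac12\Ex{d_{12,T}+d_{23,T}}$. This is precisely the quantity the paper attacks via the particle-between-two-movable-walls bijection, where $s_t=d_{12,t}+d_{23,t}$ increases by one exactly on boundary ``penetrations'' and the regret becomes half the expected number of wall visits.

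The gap is in the final computation, and it is a real one. First, the exponential ansatz you write down is internally inconsistent: along an anti-diagonal the coordinate $a-b$ moves by $\pm 2$, so $g\propto\xi^{|a-b|}$ forces $\xi^2+\xi^{-2}=2/(1-\delta)$, not $\xi+\xi^{-1}=2/(1-\delta)$. The ansatz that does produce the stated characteristic equation is $\xi^{a}$ (or $\xi^{b}$), since the interior recurrence in the single variable $a$ reads $g=\frac{1-\delta}{2}(g(a+1,b-1)+g(a-1,b+1))$. More importantly, \emph{no} single-exponential ansatz can satisfy the boundary recurrences. Writing out the $a=0$ boundary condition for $g(a,b)=f(a,b)-(a+b)$ gives
\[
g(0,b)=\tfrac{1-\delta}{2}+\tfrac{1-\delta}{2}\bigl[g(1,b-1)+g(1,b)\bigr],
\]
and for $g=C\xi^{|a-b|}$ (or $C(\xi^a+\xi^b)$) the $b$-dependent terms cannot cancel against the $b$-independent inhomogeneity $\tfrac{1-\delta}{2}$: one is left requiring $C\xi^{b-2}\bigl(\xi^2-\tfrac{1-\delta}{2}(1+\xi)\bigr)=\tfrac{1-\delta}{2}$ for all $b$, which is impossible. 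Even allowing an extra term of the form $D\xi^{a+b}$ (which is also interior-harmonic) fixes the boundary at $a=0$ and $b=0$ but fails the corner equation at $(0,0)$. So ``matching the boundary constants'' does not close, and the constant $\tfrac43$ appears without derivation. Finally, the assertion that the problem ``effectively reduces to a one-dimensional reflected walk handled by the $k=2$ machinery'' is the right intuition — this is exactly the paper's wall picture — but it is not made precise here: the anti-diagonal walks are coupled through the boundary escapes, the per-anti-diagonal escape rate depends on $s$, and turning that into an asymptotic for $\Ex{s_T}$ requires an argument (e.g.\ the local-time / visit-count analysis the paper alludes to) that is not supplied. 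The high-level reduction is sound; the analytic step that would actually yield $\tfrac{2}{3\sqrt{2\delta}}$ is missing.
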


\begin{theorem}
\label{thm:4brownian}
The regret inflicted by the adversary that advances experts 1 and 3 together with probability
$\frac{1}{2}$, and, experts $2$ and $4$ together with probability $\frac{1}{2}$, as $\delta \to 0$, is $\frac{\pi}{4}\frac{1}{\sqrt{2\delta}}$. 
\end{theorem}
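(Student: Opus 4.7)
The plan is to reduce the regret to (half) the expected range $\Exlong{G_{(1)}(T)-G_{(4)}(T)}$ of the sorted cumulative gains, then compute this range in the Brownian scaling limit. I will introduce the gap process $(u,v,w):=(G_{(1)}-G_{(2)},\,G_{(2)}-G_{(3)},\,G_{(3)}-G_{(4)})$ and exploit the fact that the comb dynamics on $(u,v,w)$ are invariant under the reversal $(u,v,w)\leftrightarrow(w,v,u)$, since odd (resp.\ even) events act identically on positions $\{1,3\}$ (resp.\ $\{2,4\}$). Starting from the all-tied state, this symmetry forces $\Ex{u_T}=\Ex{w_T}$ and hence $\Ex{G_{(1)}+G_{(4)}}=\Ex{G_{(2)}+G_{(3)}}$, which combined with Claim~\ref{cl:alg_balanced} (applied in the geometric-horizon setting) gives
\[
\regretd \;=\; \tfrac{1}{2}\Exlong{G_{(1)}(T)-G_{(4)}(T)} \;=\; \tfrac{1}{2}\Exlong{u_T+v_T+w_T}.
\]

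Next I analyse the walk $(u,v,w)$ on $\Z_{\geq 0}^3$. In the interior $\{u,v,w\geq 1\}$ an odd event maps $(u,v,w)\mapsto(u+1,v-1,w+1)$ and an even event maps $(u,v,w)\mapsto(u-1,v+1,w-1)$, each with probability $1/2$. Hence $a:=u+v=G_{(1)}-G_{(3)}$ and $b:=v+w=G_{(2)}-G_{(4)}$ are conserved in the interior and the motion is one-dimensional along $(1,-1,1)$. Whenever the walk hits a face of the octant, identity swaps among experts produce the boundary transitions, and a case-by-case analysis shows that $a$ and $b$ are non-decreasing along the trajectory. Rescaling space by $\sqrt{2\delta}$ and time by $\delta$, and running until an independent $\tilde T\sim\mathrm{Exp}(1)$, the walk converges to a reflected diffusion $(U,V,W)$ on $\R_{\geq 0}^3$ with interior drift along $(1,-1,1)$, and it then suffices to show $\Exlong{U_{\tilde T}+V_{\tilde T}+W_{\tilde T}}=\pi/2$.

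The main obstacle is this final Brownian evaluation. Exploiting the interior invariants $A:=U+V$ and $B:=V+W$ together with the $A\leftrightarrow B$ distributional symmetry, one is led to a two-wall reflection problem: $V$ is a standard Brownian motion reflected between $0$ and $\min(A,B)$, while $A,B$ grow according to coupled boundary local-time rules. Unlike the $k=3$ case of Theorem~\ref{thm:3brownian}, which features a single moving wall, here the two walls are coupled through the identity-swap transitions, so one must track the joint process $(A_\tau,B_\tau,V_\tau)$. My plan of attack is to first establish self-similarity $A_\tau,B_\tau,V_\tau=\Theta(\sqrt{\tau})$, then to express $\Exlong{A_{\tilde T}+B_{\tilde T}-V_{\tilde T}}$ as an integral of Brownian occupation times at the walls; the $\pi/2$ constant is expected to emerge from an arcsine-type identity, in keeping with the ``delicate asymptotics of the random walk of a particle between multiple walls'' highlighted in the introduction.
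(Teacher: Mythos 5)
Your reduction $\regretd=\tfrac12\Ex{u_T+v_T+w_T}$ is correct, and your overall strategy --- pass to the gap process, use the interior conservation of $a=u+v$ and $b=v+w$, and then take a diffusive limit --- is essentially the route the paper sketches (``particle between two walls''). However there are two issues, one cosmetic and one substantive.

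\textbf{The invariant you are implicitly using.} You justify your monotonicity claim about $a,b$ with ``a case-by-case analysis,'' but a raw case analysis of the octant faces is false: from a state $(u,v,w)=(0,v,w)$ with $w>0$, an even step sends $(0,v,w)\mapsto(1,v,w-1)$, so $b=v+w$ \emph{decreases}. The claim you want is rescued by a stronger fact than the distributional symmetry $\Ex{u_t}=\Ex{w_t}$ you invoke: started from the all-tied state, the comb dynamics preserve $u_t=w_t$ \emph{pathwise} for all $t$. Indeed $u-w$ is conserved in the interior and on the face $v=0$; the only transitions that change $u-w$ occur at $(0,v,w)$ with $w>0$ or at $(u,v,0)$ with $u>0$ under an even event, and these states are unreachable once $u=w$. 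With $u\equiv w$ one has $a\equiv b$ identically, and then $a$ is conserved in the interior, unchanged at $u=0$ (where both events send $(0,v,0)\to(1,v-1,1)$), and increases by $1$ exactly when an odd event occurs at $v=0$. You should state this invariant explicitly: it is what makes the claim ``$a,b$ non-decreasing along the trajectory'' true, and it collapses your picture --- in which you propose to ``track the joint process $(A_\tau,B_\tau,V_\tau)$'' with ``coupled boundary local-time rules'' --- to the single-wall-process picture the paper describes. Concretely, taking the particle to be $u$ rather than $v$: the particle does SRW, $u\ge 0$ is a \emph{fixed} reflecting wall ($u=0\mapsto u=1$ with $a$ unchanged), and $u\le a$ is a \emph{movable} wall (at $u=a$, an odd step pushes $a\mapsto a+1$, an even step pulls $u\mapsto u-1$), exactly matching the description in Section~\ref{sec:brownian}. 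One also has $\regretd=\tfrac12\,\Ex{\#\{t\le T: u_t=0\}}$, which is the paper's ``visits to the fixed wall'' characterization.

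\textbf{The missing computation.} The actual content of the theorem --- the constant $\pi/4$ --- is not proved. Your text explicitly calls the last step a ``plan of attack'' and says the constant ``is expected to emerge from an arcsine-type identity.'' That is a conjecture, not an argument. The delicate part of this theorem is precisely the asymptotic evaluation of the expected number of visits to the fixed wall (equivalently $\Ex{u_T+v_T+w_T}$) for the SRW-between-a-fixed-and-a-moving-wall process under geometric killing, and no such evaluation is carried out. Until that computation is done, the proposal establishes the setup but not the statement.
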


\paragraph{Main idea behind the analysis.} We show a bijection between the random process defined by the comb adversary and the simple random walk of a particle between two walls. For $k=3$, the two walls are ``movable'', while for $k=4$, one wall is `'fixed'' and the other is movable. I.e., when the particle coincides with the wall and tries to penetrate it in the next step, a movable wall moves one step in the direction of particle's movement while the particle doesn't move, but a fixed wall doesn't move and the particle bounces one step back. The comb adversary's regret maps to half of the expected number of visits of the particle to one of the movable walls for $k=3$, and the fixed wall for $k=4$.  Computing the expected number of visits leads to interesting asymptotic analysis.

\bibliographystyle{plainnat}
\bibliography{bibs,machine_learning}

\appendix
\section{Proofs from Section~\ref{sec:prelim}}
\label{app:prelim}

\begin{claim}[Binary adversary]
The minimax regret defined by the class of binary adversaries is exactly the
same as that defined by general adversaries:
\[\inf_{\alg}\sup_{\dist^{\{0,1\}}}\regretf(\dist^{\{0,1\}},\alg) =
\inf_{\alg}\sup_{\dist^{[0,1]}}\regretf(\dist^{[0,1]},\alg).\]
\label{cl:binary}
\end{claim}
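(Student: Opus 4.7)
The easy direction is immediate: since $\{0,1\}\subset[0,1]$, every binary adversary is a valid general adversary, so $\inf_{\alg}\sup_{\dist^{\{0,1\}}}\regretf(\dist^{\{0,1\}},\alg)\le \inf_{\alg}\sup_{\dist^{[0,1]}}\regretf(\dist^{[0,1]},\alg)$. For the reverse inequality, the plan is to invoke the minimax theorem~\eqref{eq:minimax} to rewrite both quantities in the $\sup_{\dist}\inf_{\alg}$ form, and then to show that for every $[0,1]$-valued adversary $\dist^{[0,1]}$ there is a binary adversary $\dist^{\{0,1\}}$ for which $\inf_{\alg}\regretf(\dist^{[0,1]},\alg)\le \inf_{\alg}\regretf(\dist^{\{0,1\}},\alg)$.

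Given $\dist^{[0,1]}$, I would define $\dist^{\{0,1\}}$ by coordinate-wise Bernoulli rounding: sample a full real-valued trajectory $\gains_{[0,T]}\sim\dist^{[0,1]}$ and then, for every $(i,t)$ independently, draw $\hat{g}_{i,t}\sim\mathrm{Bernoulli}(\gainit)$; the binary adversary outputs $\{\hat{\gainst}\}_{t=1}^{T}$. To compare against any binary-game algorithm $\alg$, I would lift $\alg$ to a $[0,1]$-game algorithm $\alg'$ that, upon observing $\gainst$ at each time $t$, internally draws $\hat{g}_{i,t}\sim\mathrm{Bernoulli}(\gainit)$ coordinate-wise to build an internal binary history $\widehat{\gains}_{[0,t-1]}$ with the same joint law as the one seen by $\alg$ in the binary game, and then plays from the distribution $\alg_t(\widehat{\gains}_{[0,t-1]})$.

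The regret comparison would rest on two ingredients. For the expected per-step gain, coordinate-wise independence of the rounding gives $\Ex{\hat{g}_{i,t}\mid \widehat{\gains}_{[0,t-1]},\gainst}=\gainit$, so the expected binary-game gain $\Ex{\sum_i (\alg_t(\widehat{\gains}_{[0,t-1]}))_i\,\hat{g}_{i,t}}$ equals $\Ex{\sum_i (\alg_t(\widehat{\gains}_{[0,t-1]}))_i\,\gainit}$, which is precisely the expected gain of $\alg'$ against $\dist^{[0,1]}$. For the best-expert term, conditional on the real trajectory $\gains_{[0,T]}$ each cumulative $\hat{G}_{i,T}$ is a sum of independent Bernoullis with mean $\Gainit[i][T]$, so convexity of the maximum (Jensen's inequality) gives $\Ex{\max_i \hat{G}_{i,T}\mid \gains_{[0,T]}}\ge \max_i \Gainit[i][T]$. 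Averaging over $\gains_{[0,T]}$ and subtracting the (common) expected algorithm gain then yields $\regretf(\dist^{[0,1]},\alg')\le \regretf(\dist^{\{0,1\}},\alg)$; taking $\inf$ over $\alg$ on both sides produces the desired reverse inequality.

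The step I expect to be most delicate is the coupling/filtration bookkeeping inside $\alg'$: since $\alg$ and $\alg'$ observe genuinely different data, I must verify that the simulated binary history built from $\gainsht$ is measurable at the correct time and has exactly the same joint law as the binary adversary's output, so that $\alg$'s behavior and expected gain transport correctly to the $[0,1]$ world. The Jensen step for the maximum and the equality of per-step gains are then short consequences of the independence structure of the Bernoulli rounding, and the overall argument closes cleanly once the minimax theorem is applied.
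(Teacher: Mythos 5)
Your argument is correct and rests on the same two ingredients as the paper's proof: coordinate-wise Bernoulli rounding that preserves the algorithm's expected gain, and Jensen's inequality for the best-expert term. The difference is structural. The paper works directly on the $\inf\sup$ side: it takes the minimax-optimal algorithm $\alg^*$ for binary adversaries, lifts it to a $[0,1]$-game algorithm by having it round the observed gains internally, and observes that against any $[0,1]$ adversary this lifted algorithm incurs regret at most what $\alg^*$ incurs against the rounded (binary) adversary, which is at most the binary minimax value. You instead pass to the $\sup\inf$ side via the minimax theorem and modify the adversary, lifting an arbitrary binary-game algorithm to a $[0,1]$-game one to compare values. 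These are dual views of the same coupling. Your route has the minor extra burden of checking that the minimax theorem applies to the restricted class of binary adversaries (it does, since that strategy set is still a simplex and the payoff is still bilinear), while the paper's route implicitly uses von Neumann only to guarantee that a binary-minimax-optimal $\alg^*$ exists. One small point worth tightening in either version: the lifted algorithm is stateful (it must remember its previously rounded history to keep the simulated binary trajectory consistent), so it is not literally ``oblivious'' in the sense the paper fixes in Section~\ref{sec:prelim}; this is harmless because one can average out the internal rounding randomness to produce an oblivious algorithm with the same expected per-step gain, but it deserves a sentence.
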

\begin{proof}
Given that the class of general adversaries is larger than the class of binary
adversaries, it immediately follows that
$\inf_{\alg}\sup_{\dist^{\{0,1\}}}\regretf(\dist^{\{0,1\}},\alg) \leq
\inf_{\alg}\sup_{\dist^{[0,1]}}\regretf(\dist^{[0,1]},\alg)$. It is therefore
enough to show that
$\inf_{\alg}\sup_{\dist^{\{0,1\}}}\regretf(\dist^{\{0,1\}},\alg) \geq
\inf_{\alg}\sup_{\dist^{[0,1]}}\regretf(\dist^{[0,1]},\alg)$. This can be seen
as follows: consider the minimax optimal algorithm $\alg^*$ for the class of
binary adversaries.  When faced with a $[0,1]$ adversary, $\alg^*$, in every
round, ``discretizes'' the gains set by the adversary by independently rounding
them to $0$ or $1$ so that the expectation is equal to the gain $\gainit$ set by
the adversary: i.e., a gain of $\gainit$ is set to $1$ with probability
$\gainit$ and $0$ with the remaining probability. From the algorithm $\alg^*$'s
point of view, whether the adversary originally used a distribution with gains
in $[0,1]$ that $\alg^*$ discretized to $\{0,1\}$, or the adversary already
set gains in $\{0,1\}$ with the {\em same} distribution doesn't make a
difference. Both result in exactly the same expected gains for the algorithm.
However, using the discretized version could possibly help the adversary. We see
this as follows.

For some step $t$ and history $\gainsht$, let the adversary set
expert's $i$ gain to be $\gainit \notin \{0,1\}$ with non zero
probability. Consider the following step-by-step discretization by the adversary.
It changes random variable $\gainit$ (only for expert $i$ and time $t$ and
history $\gainsht$) to be $\{0,1\}$ while preserving expectations. While performing
this discretization the adversary does not change the distribution in
future steps, i.e., it chooses future distributions as if the discretization was
not performed. We now show that the expected gain of the best expert can only increase. For
each fixed value $\gainit$ let us denote by $\xi$ a random
variable that takes value 1 with probability $\gainit$ and 0 with probability
$1-\gainit$. Let us fix all choices of the adversary other than $\xi$.
Then our substitution of constant $\gainit$ by a random variable $\xi$ can only
increase the gain of the best expert $\max_{i\in[k]}\Gainit[i][T]$. Indeed,
this follows from the inequality
\[
\max(\Ex{\xi}+c_1,c_2) \le \Ex{\max(\xi+c_1,c_2)},
\]
where $c_1$ and $c_2$ are two constants determined by a fixed set of adversary's
random choices. Hence, our modification may only increase the total expected
regret, proving the theorem.
\end{proof}

\begin{claim}[Balanced adversary]
For each time $t$ and for every possible history $\gainsht$, the minimax optimal adversary can pick $\dist_{t}(\gainsht)$, such that
$\Exlong[\dist_{t}(\gainsht)]{\gainit}$ is the same for each expert $i$ .
\label{cl:arm_balanced}
\end{claim}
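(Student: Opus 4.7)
The plan is to start from any minimax-optimal adversary $\dist^*$ and modify it into a balanced adversary $\tilde{\dist}$ that is also minimax-optimal, using a symmetrization over permutations of expert labels. By the minimax theorem~\eqref{eq:minimax} such a $\dist^*$ exists, so it suffices to exhibit $\tilde{\dist}$ with $\inf_{\alg}\regretf(\tilde{\dist},\alg)\geq\inf_{\alg}\regretf(\dist^*,\alg)$ and with every conditional distribution $\tilde{\dist}_t(\gainsht)$ balanced.

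Fix a step $t$ and history $\gainsht$ and let $\mu_i=\E_{\dist^*_t(\gainsht)}[\gainit]$. I would define $\tilde{\dist}_t(\gainsht)$ by sampling $g\sim\dist^*_t(\gainsht)$ together with an independent uniform permutation $\sigma\in S_k$ and outputting $\sigma\cdot g$, so that $\E_{\tilde{\dist}_t}[\gainit]=\frac{1}{k}\sum_j\mu_j=:\bar\mu$ for every $i$. To keep future play consistent I would extend the modification globally: the adversary commits once to a uniform $\sigma\in S_k$, treats the internal history as $\sigma^{-1}\cdot(\text{external history})$, runs $\dist^*$ on that internal history, and publishes $\sigma$ applied to $\dist^*$'s output. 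The external trajectory is then a jointly symmetrized copy of the trajectory of $\dist^*$.

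To bound the regret I would split $\regretf=\E[\max_i\Gainit[i][T]]-\sum_t\E[\algt\cdot\gainst]$ and control each piece separately. The first piece is pointwise invariant under a global permutation of expert labels, so $\E[\max_i\Gainit[i][T]]$ is the same under $\tilde{\dist}$ and $\dist^*$. For the second piece, against $\tilde{\dist}_t$ the conditional expected gain of every expert is $\bar\mu$, so \emph{every} algorithm's step-$t$ expected gain equals $\bar\mu$; against $\dist^*_t$ a best-responding algorithm can always get at least $\max_i\mu_i\geq\bar\mu$ by putting all weight on $\arg\max_i\mu_i$. Summing over $t$ gives the required inequality, so $\tilde{\dist}$ is maximin-optimal and hence minimax-optimal.

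The main obstacle is the conditional-balance claim: showing that $\tilde{\dist}_t(\gainsht)$ is balanced conditionally on every realized external history rather than just marginally. With a globally hidden $\sigma$ the posterior $P(\sigma\mid\gainsht)$ can concentrate and, a priori, skew the per-expert conditional means. I would handle this by exploiting permutation-symmetry of the value function $V(\Gainst[t-1])$, which is invariant under $\sigma\in S_k$ because the game is symmetric under consistent relabeling of experts, together with an induction on the number of remaining steps, to show that the posterior-weighted conditional distribution stays balanced. A cleaner local alternative is to invoke the subgame minimax theorem to write the Bellman equation $V(\Gainst[t-1])=\min_{q\in\Delta_k}\max_{g\in\{0,1\}^k}\bigl[V(\Gainst[t-1]+g)-q\cdot g\bigr]$ and then use the algorithm's equilibrium indifference: $\mu_i$ must be constant on the support of the equilibrium $q^*$, and the off-support coordinates can be equalized using the symmetry of $V$ without changing the adversary's value.
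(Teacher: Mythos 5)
There is a genuine gap, and you correctly identify it yourself: the global-$\sigma$ symmetrization makes $\tilde{\dist}$ \emph{marginally} balanced at each step but does not make the \emph{conditional} distribution $\tilde{\dist}_t(\gainsht)$ balanced, because after observing a few rounds the posterior over $\sigma$ concentrates. Your regret bound in the middle paragraph already implicitly assumes the conclusion --- ``against $\tilde{\dist}_t$ the conditional expected gain of every expert is $\bar\mu$'' is exactly the conditional balance you have not established --- so as written the argument is circular. (The maximin-optimality of $\tilde{\dist}$ can be rescued without conditional balance by writing $\regretf(\tilde{\dist},\alg)=\Ex[\sigma]{\regretf(\dist^*,\sigma^{-1}\cdot\alg)}\ge\inf_{\alg'}\regretf(\dist^*,\alg')$, but that still leaves $\tilde{\dist}$ unbalanced conditionally, which is what the claim requires.) Your proposed Fix~1 --- permutation-symmetry of the value function plus induction --- does not obviously address the posterior concentration, and no concrete inductive invariant is offered. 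Your Fix~2 is closer to something that works, but it invokes ``symmetry of $V$'' where what you actually need is \emph{monotonicity} of $V$ in each coordinate (so that raising off-support $\mu_i$ cannot lower the adversary's continuation value), and the whole Bellman / subgame-minimax machinery is left unconstructed.

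The paper avoids both the symmetrization and the dynamic-programming formalism and argues locally and elementarily. Take any minimax optimal $\dist$, fix a single $(t,\gainsht)$, let $i^*\in\argmax_i\Exlong[\dist_t(\gainsht)]{\gainit}$, and modify $\dist_t(\gainsht)$ by flipping some of the other experts' realized gains from $0$ to $1$ so that every expert's conditional mean matches expert $i^*$'s. Future steps of $\dist$ are left untouched, and the adversary additionally \emph{reveals} the pre-modification gains to the algorithm so that the algorithm's information state (and hence its play and its gains at all other steps) is unchanged; the myopic best response at step $t$ still picks $i^*$, whose distribution is unchanged, so the algorithm's total gain is unchanged. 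Meanwhile $\max_i\Gainit[i][T]$ is monotone non-decreasing in each coordinate gain, so the modification can only increase the best expert's gain, and since revealing extra information can only help the algorithm, removing the reveal can only further increase the regret. Thus the regret is weakly larger, and since $\dist$ was already maximin optimal it must be unchanged. Iterating over all $(t,\gainsht)$ yields a balanced maximin-optimal adversary. If you want to salvage your Fix~2, you should replace the appeal to value symmetry by an explicit monotonicity lemma for $V$ and spell out the raising argument for off-support coordinates; that would give a valid alternative proof at the cost of first having to set up the Bellman recursion and subgame minimax, which the paper's direct modification avoids.
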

\begin{proof}
Given an adversary that is not balanced, we modify it so that
algorithm cannot improve, but the expected gain of the best expert
$\max_{i\in[k]}\Gainit[i][T]$ may only increase.  For the minimax optimal adversary $\dist$,
one best response algorithm is to choose an expert

\[
i^*\in\argmax_{i\in[k]}\Exlong[\dist_{t}(\gainsht)]{\gainit}.
\]
The adversary can modify distribution $\dist_{t}(\gainsht)$ so that for all experts $[k]\setminus\{i^*\}$

\[
\Exlong[\dist_{t}(\gainsht)]{\gainit}= \Exlong[\dist_{t}(\gainsht)]{\gainit[i^*][t]},
\]
by switching some of the gains from $0$ to $1$ for
$i\in[k]\setminus \{i^*\}$. While making such transformation the adversary does
not change $\dist$ in the future time steps after $t$, i.e., the adversary continues as
if there was no transformation at time $t$. The adversary also reveals to the
algorithm the value of $\gainit$ as it was drawn in the original
$\dist_{t}(\gainsht)$.

The best response algorithm described above cannot improve its gain at time $t$, as the expected gain of
the best expert does not change. We also note that the algorithm cannot improve
in time before $t$, nor it can improve for the time steps after $t$,
as the knowledge of the algorithm about prior history and the adversary
distribution do not change for these times.

On the other hand, the expected gain of the best expert
$\max_{i\in[k]}\Gainit[i][T]$ could only improve for every such modification of
$\dist_{t}(\gainsht)$.
\end{proof}

\section{Proofs and Results from Section~\ref{sec:finite}}
\label{app:finite}
\subsection{k experts, finite horizon: optimal adversary and regret}
As mentioned before, the simplification afforded by the 2 experts case doesn't carry through for
arbitrary $k$. Here is the design problem faced by the optimal adversary: for every
time step $t$, given the gains $\Gainst[t-1]$ at time $t-1$, the adversary has
to compute the distribution $\dist_{t}(\Gainst[t-1])$ at time $t$ so as to
maximize the expression for regret given by
$$\regretf(\dist) = \Exlong[\gains_{[0,T]}\sim\dist]{
\max_{i\in[k]}\Gainit[i][T]-\frac{\sum_{i\in[k]}\Gainit[i][T]}{k}
}.$$
Note that given any vector of gains $\Gainst[t-1]$ after $t-1$ time steps, the
adversary's distribution $\dist_t$ at time $t$ is over $2^k$ actions
corresponding to ``setting gain to 0'' or ``setting gain to 1'' for each expert
with the restriction that the expected gain of each expert is the same. This
design problem of the adversary can be thought of as the design of a controlled
random walk on $\Z^{k}$ so that the advance in each dimension in expectation is
the same at every step, with the objective of maximizing the regret expression
above.

\begin{oneshot}{Claim~\ref{cl:polytope}}
For each time step $t$, the set of all possible distributions
$\dist_{t}(\Gainst[t-1])$ for a balanced adversary forms a convex polytope in
$2^k$-dimensional space.
\end{oneshot}
\begin{proof}
First, note that if two distributions are feasible, a convex combination of
them is also feasible. Thus the set of feasible distributions is convex.
Second, the feasibility conditions can all be described with linear
equalities/inequalities. Finally, the set of feasible distributions is bounded.
Thus the set of feasible distributions is a convex polytope.
\end{proof}

\begin{oneshot}{Claim~\ref{cl:finiteVertices}}
There is a fixed finite set of distributions (over $2^k$ actions) such that at
every time step $t$ and every previous history, the minimax optimal adversary can always choose a distribution
from this set.
\end{oneshot}
\begin{proof}
Given that the set of possible distributions is a convex polytope, at every
$t$, it is a weakly dominant strategy for the adversary to choose from one
among the vertices of this polytope. This is because the expression for regret
(which the adversary maximizes) is linear in distributions, i.e., a convex
combination of two distributions will yield a regret which is the convex
combination of the corresponding regrets. Furthermore, this convex polytope of
possible distributions remains the same, independent of $t$ and previous
history.
\end{proof}

\begin{remark}
Note that this polytope of possible distributions has exponentially many vertices. This is easy to see: for every subset $S$ of $\{1,\dots,k\}$, treat experts in $S$ as a group and those in $\bar{S}$ as a group. With probability half, set the gains of experts in $S$ to be $1$ and those in $\bar{S}$ to be $0$, and with the remaining probability do the opposite. Each such distribution is a vertex, and there are exponentially many of them.
\end{remark}

\begin{remark}
For concreteness, for the case of $k=3$ and $k=4$, we list all the vertices
of the distribution polytopes. While describing a distribution, we shall list
only actions in its support, as it turns out the respective probabilities can
be reconstructed from the balanced condition for any extremal distribution in
our convex polytope. While describing an action, we list the set of experts
whom we advance. For instance, the list $\{1\},\{23\}$ reads as ``advance
expert $1$ with probability $0.5$; advance experts $2$ and $3$ (but not $1$)
with remaining probability''. Similarly, the list $\{234\}\{12\}\{13\}\{14\}$
reads as ``with probability $2/5$ advance experts 2,3, and 4; with probability
$1/5$ advance experts 1 and 2; with probability $1/5$ advance experts 1 and 3;
with probability $1/5$ advance experts 1 and 4.''
For $k=3$ and $k=4$ the lists are (excluding the trivial distribution $\{\}$ that advances no experts at all, the distributions $\{123\}$ for $k=3$ and $\{1234\}$ for $k=4$ that advance all the experts together):
\begin{center}
\begin{tabular}{|c|c|c|}
\hline
\multicolumn{2}{ |c| }{$k=4$} \\
\hline
$\{123\}\{4\}$                 &  $\{1\}\{2\}\{34\}$   \\
$\{124\}\{3\}$                 &  $\{1\}\{3\}\{24\}$   \\
$\{134\}\{2\}$                 &  $\{1\}\{4\}\{23\}$   \\
$\{234\}\{1\}$                 &  $\{2\}\{3\}\{14\}$   \\
                                     &  $\{2\}\{4\}\{13\}$   \\
$\{12\}\{34\}$                 &  $\{3\}\{4\}\{12\}$    \\
$\{13\}\{24\}$                 &  \\
$\{14\}\{23\}$    		         &  $\{12\}\{134\}\{234\}$  \\
					         &  $\{13\}\{124\}\{234\}$\\
$\{1\}\{23\}\{24\}\{34\}$     &   $\{14\}\{123\}\{234\}$\\
$\{2\}\{13\}\{14\}\{34\}$     &   $\{23\}\{124\}\{134\}$\\
$\{3\}\{12\}\{14\}\{24\}$     &   $\{24\}\{123\}\{134\}$\\
$\{4\}\{12\}\{13\}\{23\}$     &   $\{34\}\{123\}\{124\}$\\
						  &  \\
$\{123\}\{124\}\{134\}\{234\}$ &  $\{123\}\{14\}\{24\}\{34\}$  \\
                                            &  $\{124\}\{13\}\{23\}\{34\}$\\
$\{1\}\{2\}\{3\}\{4\}$             &  $\{134\}\{12\}\{23\}\{24\}$\\
                                            &  $\{234\}\{12\}\{13\}\{14\}$ \\                                       
\hline
\end{tabular}
\end{center}

\begin{center}
\begin{tabular}{|c|}
\hline
$k=3$                    \\ \hline
$\{1\}\{23\}$            \\
$\{2\}\{13\}$            \\
$\{3\}\{12\}$            \\
$\{1\}\{2\}\{3\}$        \\
$\{12\}\{13\}\{23\}$     \\
\hline
\end{tabular}
\end{center}
\end{remark}

Encouraged by a very simple optimal adversary for $k=2$, one
may think that similar behavior extends to $3$ or more experts.
Unfortunately, this is not the case. The optimal adversary will be time
dependent for $k=3$. For instance, if only one step remains before deadline
the optimal adversary would do the following:

\begin{itemize}
\item if $\Gainit[1][T-1]=\Gainit[2][T-1]=\Gainit[3][T-1]$, then $\{1\}\{2\}\{3\}$;
\item if $\Gainit[1][T-1]=\Gainit[2][T-1]>\Gainit[3][T-1]$, then $\{1\}\{23\}$ or $\{13\}\{2\}$;
\item if $\Gainit[1][T-1]>\Gainit[2][T-1]$, then any balanced strategy.
\end{itemize}

\subsection{k experts, finite horizon: optimal algorithm}
We note that given a finite time horizon $T$ and finite list of balanced
distributions for the adversary, one can write a dynamic program for the
maximal value of the regret at any time period $t\le T$ and initial vector of
gains $\Gainst\in[T]^k$. We can solve this program by using backward induction
over time and furthermore given the regret function at every time step
$t\in[T]$ and vector of gains $\Gainst\in[T]^k$ we can compute the best
strategy for the algorithm. The running time of such an algorithm would be
$O(T^k)$.  This approach gives us the answer for a small number of experts
and reasonably small time horizon $T$. On the other hand, it becomes impractical as $T$ and
especially $k$ get larger, and furthermore, it does not tell us much about intrinsic
structure of the optimal algorithm and the optimal adversary.

Even for $k=2$ the optimal algorithm depends
on the time remaining before the deadline $T$.  For example, if the leading
expert is ahead of the lagging expert by more than the number of remaining time
steps, then the optimal algorithm should always choose the leading expert; on
the other hand, if the difference between leading and lagging experts is
smaller than the time remaining, then there should be non zero chance of
selecting the lagging expert.

\paragraph{The probability matching algorithm.} Given this, the main question in the finite horizon case if there is a simple description of the optimal algorithm. For the case of $k=2$ experts, we show that the answer is yes: the optimal algorithm is a simple probability matching algorithm. I.e., the optimal algorithm follows expert $i$ with the probability that, given the current cumulative gains of both the experts and the number of remaining steps, expert $i$ will finish as the leading expert.

The derivation of this optimal algorithm is related to how we derive the optimal algorithm for the geometric horizon case. So we do this towards the end of Section~\ref{app:pmatching}.

\section{Geometric Horizon Model}
\label{app:delta}
\begin{claim}{Observations on the geometric horizon model.}
\label{cl:deltaAll}
The following statements are true:
\begin{enumerate}
\item The minimax regret defined by the class of binary adversaries is exactly the
same as that defined by general adversaries:
$\inf_{\alg}\sup_{\dist^{\{0,1\}}}\regretd(\dist^{\{0,1\}},\alg) =
\inf_{\alg}\sup_{\dist^{[0,1]}}\regretd(\dist^{[0,1]},\alg).$
\item For each time step $t$ and for every possible history $\gainsht$, the minimax optimal adversary can pick $\dist_{t}(\gainsht)$, such that
$\Exlong[\dist_{t}(\gainsht)]{\gainit}$ is the same for each expert $i$ .
\item A balanced adversary $\dist$ inflicts the same regret on every algorithm $\alg$. Since the minimax optimal adversary can always be balanced, the minimax optimal regret is given by:
\begin{align*}
&\regretd(\dist,\alg)\\
&=\regretd(\dist)\\
& = \sum_{t=0}^{\infty}\delta\cdot(1-\delta)^t\\
&\qquad\qquad\Exlong[\gains_{[0,t]}\sim\dist]
        {\left(\max_{i\in[k]}\Gainit-\frac{\sum_{i\in[k]}\Gainit}{k}\right)}
\end{align*}
\item For each time $t$, the set of all possible distributions $\dist_{t}(\Gainst[t-1])$
      for the adversary forms a convex polytope with exponentially many (in $k$) vertices .
\item There is a fixed finite set of distributions (over $2^k$ actions) such that at every time $t$ and every previous history, the minimax optimal adversary can always choose a distribution from this set.
\end{enumerate}
\end{claim}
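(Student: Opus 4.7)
My overall plan is to push each of the five statements in Claim~\ref{cl:deltaAll} through by writing $\regretd(\dist,\alg)$ as the $\delta$-weighted sum $\sum_{t=0}^{\infty}\delta(1-\delta)^{t}$ of its stopping-time-$t$ slices, and transferring the corresponding finite-horizon inequality from Claims~\ref{cl:binary}, \ref{cl:arm_balanced}, \ref{cl:alg_balanced}, \ref{cl:polytope}, and~\ref{cl:finiteVertices} into the sum term-by-term. Absolute summability is automatic since $\sum_{t=0}^{\infty}\delta(1-\delta)^{t}=1$ and per-step gains lie in $[0,1]$, so no convergence issues arise.

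I would begin with the cleanest part, (3). A balanced adversary has, at every step and every history, identical conditional expected per-step gain across experts; hence for any algorithm $\alg$ the expected gain at step $t$ equals $\frac{1}{k}\sum_{i}\Ex{\gainit[i][t]}$, which is $\alg$-independent. Summing over $t$ and taking the geometric expectation over stopping times gives the displayed formula. For (4), the set of balanced distributions on $\{0,1\}^{k}$ at any single node is carved out by linear balancing constraints that depend on neither $t$ nor $\delta$, so it is exactly the polytope of Claim~\ref{cl:polytope}; the exponentially many vertices arise from the same $(S,\bar{S})$-partition construction. For (5), $\regretd(\dist,\alg)$ is linear in the adversary's distribution at any single node $(s,\gainsht[s-1])$ when all other coordinates of $\dist$ and $\alg$ are held fixed, so an optimal adversary can always be moved to a vertex of the polytope from~(4); the list of such vertices is identical to the one enumerated in Appendix~\ref{app:finite}.

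For (1) and (2) I would mimic the local-modification proofs of Claims~\ref{cl:binary} and~\ref{cl:arm_balanced}. Fix any node $(s,\gainsht[s-1])$, and either discretize each $\gainit[i][s]$ to a Bernoulli random variable with the same mean (part~1), or raise selected $0$-entries of the lagging experts to $1$ to equalize expected per-step gains (part~2); leave $\dist_{t}$ unchanged for $t\neq s$ and continue revealing the originally realized gains to the algorithm. For every stopping time $T\geq s$, the convexity inequality $\max(\Ex{\xi}+c_{1},c_{2})\leq\Ex{\max(\xi+c_{1},c_{2})}$ from the finite-horizon proof shows that the best-response algorithm's expected gain is unchanged while $\Ex{\max_{i}\Gainit[i][T]}$ weakly increases; weighting by $\delta(1-\delta)^{T}$ and summing in $T$ yields a weak improvement of $\regretd$. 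The main obstacle is that the history tree is now infinite, so the local modifications must be carried out at uncountably many nodes simultaneously rather than iterated over a finite set as in the finite-horizon proof; I would resolve this by applying the modification rule at every node inductively on the depth $s$, noting that the resulting adversary is balanced and binary by construction, and then appealing to the compactness of the strategy space and continuity of $\regretd$ in the adversary's strategy (guaranteed by the footnote accompanying the derivation of equation~\eqref{eq:delta_minimax}) together with the minimax theorem~\eqref{eq:delta_minimax} itself to conclude that a global balanced, binary adversary attaining the supremum exists.
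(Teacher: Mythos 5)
Your proposal is correct and takes essentially the same approach the paper takes (the paper itself does not give a detailed proof of Claim~\ref{cl:deltaAll}, merely stating that Claims~\ref{cl:binary}--\ref{cl:finiteVertices} ``carry over with appropriate modifications''). Writing $\regretd$ as the $\delta$-weighted sum over stopping times and pushing the finite-horizon single-node modifications through term-by-term, combined with the observation that parts (4) and (5) are $t$- and $\delta$-independent polytope facts, is exactly the intended bookkeeping; your handling of the infinite tree via simultaneous modification at every node plus compactness of the strategy space (from the footnote supporting equation~\eqref{eq:delta_minimax}) is the right way to make the iteration rigorous.
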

\begin{remark}
The third point in Claim~\ref{cl:deltaAll} above says that the minimax optimal adversary makes all algorithms achieve the same regret. In particular, if the precise realization of the stopping time random variable was leaked to the algorithm, the minimax optimal regret is not influenced in any way. On the other hand, if the adversary knew the realization of the stopping time information, it could potentially increase the minimax optimal regret. This proves that the algorithm does not benefit from knowing the precise realization of the stopping time information, whereas the adversary could potentially benefit from it.
\end{remark}

\subsection{Two experts: optimal algorithm, adversary and regret}
\label{app:2experts}
\paragraph{Notational convention.} At each time step $t$ we always enumerate
experts in the decreasing order of their cumulative gains $\Gainst$, i.e.,
experts $1$ and $2$ don't refer to identities of experts but to the leading
expert and trailing expert respectively. Observe that the strategy of the
optimal adversary at any moment $t$ should not change if cumulative gains
$\Gainst$ of all experts are changed by the same amount for every expert.
Thereby, at every time step $t$ we shall always adjust the total gains
$\Gainst$ of our experts, so that the leading expert $1$ has zero cumulative
gain $\widehat{G}_{1t}=0$.  We denote the adjusted gain of the lagging expert
$\widehat{G}_{2t}=\Gainit[2][t]-\Gainit[1][t]$ by $x$ (note that $x\le 0$).

We denote by $f(x)$ the optimal regret the adversary can obtain for an initial
configuration of $\Gains=(0,x)$, i.e., leading expert has $0$ gain and lagging
expert has $x$ gain (again, recall that $x \leq 0$). The useful thing about
this notation is that if we start at $(0,x)$ for any $x$, and the game
immediately ends at that round, we get a regret of $0$: the max expert gain is
$0$, and the algorithm didn't get any chance to get any gain because the game
ended right away. So $0 - 0 = 0$ is the regret.

\paragraph{System of Equations.} We are now ready to write our system of equations connecting these $f(x)$'s.
Our discussion of~\citeauthor{Cover65}'s result in Section~\ref{sec:finite} showed that the minimax optimal adversary in the finite
horizon model was independent of the horizon $T$, and advanced expert $1$ or
$2$ mutually exclusively with probability $\frac{1}{2}$ each. The
independence from time horizon $T$ in the finite horizon model immediately
means that this adversary is also minimax optimal for the geometric horizon model: it
doesn't care when the game ends. This adversary advances the leading expert
with probability $\frac{1}{2}$ and lagging expert with probability
$\frac{1}{2}$. Thus, starting from the $(0,x)$ configuration, we go to the
$(1,x)$ configuration with probability $\frac{1}{2}$ (corresponds to adversary
advancing the leading expert), and go to the $(0,x+1)$ configuration with
probability $\frac{1}{2}$ (corresponds to adversary advancing the lagging expert).
In the meanwhile, the algorithm would have gained $1$ with probability
$\frac{1}{2}$ regardless of which expert was advanced.  This can be transcribed
to an equation right away except that the $(1,x)$ configuration is not in our
standard format: our standard format normalizes the largest gain to $0$. To
perform such a normalization here, notice that the paths of the optimal
adversary starting at $(1,x)$ and $(0,x-1)$ are indetical except that the ``max
expert gain'' is precisely one larger when starting from
$(1,x)$ than when starting from $(0,x-1)$. We take this into account in our
equations. Summarising this as an equation we get,
\begin{align*}
f(x) &= \delta\cdot 0 + (1-\delta)\\
&\qquad\cdot\left[\frac{1}{2}\left(f(x-1)+1\right) +
\frac{1}{2}f(x+1) - \frac{1}{2} \right] \\
&= (1-\delta)\cdot\left[\frac{f(x-1)+ f(x+1)}{2}\right].
\end{align*}

When $x$ is $0$ we have to take special care because $(0,x+1)$ is just $(0,1)$.
First we rewrite gains in the descending order to obtain the $(1,0)$
configuration. But this is not in starndard format: so we go to the $(0,-1)$
format and add a $1$ to the regret in this process. Thus the difference of
$(0,0)$ from $(0,x)$ is that normalization has to be done for both choices of
adversary, as against for just one choice. We get,
\begin{align*}
f(0) &= \delta\cdot 0 + (1-\delta)\\
&\qquad\cdot\left[\frac{1}{2}\left(f(0-1)+1\right) +
\frac{1}{2}\left(f(0+1)+1\right) - \frac{1}{2} \right]\\
& = (1-\delta)\cdot\left[f(-1)+\frac{1}{2}\right].
\end{align*}

Combining these two equations, we get the following system:
\begin{align}
f(x)&=(1-\delta)\cdot\frac{f(x-1)+f(x+1)}{2}\label{eq:f1}\\
f(0)&=(1-\delta)\cdot\left(f(-1)+\frac{1}{2}\right)
\label{eq:f2}
\end{align}

\paragraph{Optimal regret.} Thus we need to solve this recurrence relation for
$f(x)$. The characteristic polynomial of this recurrence is
$x^2-\frac{2}{1-\delta} x + 1=0$, which has two real roots $\xi_1 > 1 >\xi_2$,
and $\xi_1\cdot\xi_2 = 1$, given by
$\frac{1\pm\sqrt{1-(1-\delta)^2}}{1-\delta}$. The solution to our recurrence
relation is then of the form $f(x)=c_1\cdot\xi_1^x + c_2\cdot\xi_2^x$. As the
regret cannot grow faster than a linear function and cannot be negative, it
follows that $c_2$ must be 0. Combining $f(x) = c_1\xi_1^x$ with
equation~\eqref{eq:f2}, we get
$c_1\cdot\xi_1^0=(1-\delta)\cdot\left(c_1\cdot\xi_1^{-1}+\frac{1}{2}\right)$.
This gives us that $c_1=\frac{1}{\xi_1-\xi_2}$. The optimal regret is simply
the regret starting at $(0,0)$, which is given by $f(0)$. Thus the optimal
regret is $f(0) = c_1\xi_1^0 = c_1 = \frac{1}{\xi_1 - \xi_2} =
\frac{1-\delta}{2\sqrt{1-(1-\delta)^2}}$. Thus, as $\delta \to 0$, the optimal
regret $f(0) \to
\frac{1}{2}\frac{1}{\sqrt{2\delta}}$.

\paragraph{Optimal algorithm.} Note that because of minimax principle, we were able to
compute the precise regret without even knowing anything about the algorithm.
We now proceed to compute the optimal algorithm for
$k=2$.
This will reveal how even without knowing
the optimal adversary a priori, we can simultaneously discover both the optimal
adversary, optimal regret and the optimal algorithm (a useful
exercise to the significantly more complicated case of $k=3$).

Given configuration $(0,x)$ (with $x \leq 0$ as usual), the optimal algorithm
assigns probabilities $p_1(x)$ and $p_2(x) = 1-p_1(x)$ respectively for
choosing leading and lagging experts. We drop the arguments for probabilities
when it is clear from context. The adversary has four choices, namely
advancing expert $1$ alone, or expert $2$ alone, or both experts, or none of
the experts.  For $x<0$ this corresponds to decreasing $x$ by 1, increasing $x$
by 1, not changing $x$ for the last two choices. When $x =0$, we have to take
care of the fact that advancing $1$ alone and $2$ alone are similar, in that
both of them need the normalizing $+1$. Putting what we just described into
equations, we get (note the extreme RHS corner gives the adversary's actions
corresponding to each expression, and this is common for both $x <0$ and $x
=0$):

\begin{align}
f(x)=(1\text{-}\delta)\cdot\max
\begin{cases}
f(x-1) + 1 - p_1      & ~//\{1\}        \\
f(x+1) -p_2        & ~//\{2\}   \\
f(x) + 1-p_1 - p_2   &~//\{12\}      \\
f(x)     &~ //\{\}
\end{cases}
\label{eq:Regret2}
\end{align}
\begin{align}
f(0) = (1\text{-}\delta)\cdot\max
\begin{cases}
f(0-1)+1-p_1          &  ~ //\{1\}   \\
f(0-1)+1-p_2    &  ~ //\{2\}  \\
f(0)+1-p_1-p_2    &  ~ //\{12\}  \\
f(0)          &  ~ //\{\}
\end{cases}
\nonumber
\end{align}
We realize that $p_1(0) = p_2(0) = \frac{1}{2}$ by symmetry (the optimal 
algorithm is indifferent when
the expert gains are the same). By removing strictly suboptimal actions $\{\},\{12\}$ of the adversary, we obtain
\[
f(0) = (1-\delta)\left(f(-1) + \frac{1}{2}\right).
\]
Similarly, the first part of expression~\eqref{eq:Regret2} for $x <0$ boils down to
\be
f(x)=(1-\delta)\cdot\max\Big(f(x-1) + 1 - p_1,~~ f(x+1) - p_2\Big).
\label{eq:max2}
\ee
We further simplify equation~\eqref{eq:max2}. Notice that
the optimal algorithm in minimax equilibrium must make the adversary indifferent between
any two actions the adversary is randomizing over. In this case it means that for each $x< 0$ the probabilities $p_1(x)$ and $p_2(x)$ must be chosen
by the optimal algorithm in such a way that $f(x)=(1-\delta)(f(x-1) + 1 - p_1(x)) = (1-\delta)(f(x+1) - p_2(x)).$
Note that adding these two equations and dividing by 2, we get
equation~\eqref{eq:f1}. Thus we can solve for optimal regret. Additionally, solving for $p_1(x)$ and $p_2(x)$ we obtain

\begin{align}
p_1(x) = 1 - \frac{1}{2}\xi_1^x; \qquad\qquad p_2(x) = \frac{1}{2}\xi_1^x
\label{eq:pr2}
\end{align}

This proves the optimality of the algorithm~\ref{alg:2opt} for $k=2$. We summarise our
results for $k=2$ in the following theorem. For convenience we replace the negative number $x$ by positive $d
= -x$, and also replace $\xi_1$ by $\xi = \xi_2 = \frac{1}{\xi_1} \sim 1-\sqrt{2\delta}$.
\begin{oneshot}{Theorem~\ref{thm:2opt}}
In the geometric horizon model for $2$ experts with parameter $\delta \in (0,1)$:
\begin{enumerate}
\item The optimal adversary, at every time step, advances the leading expert alone with probability
$\frac{1}{2}$ and lagging expert alone with probability $\frac{1}{2}$.
\item The optimal regret is $\frac{1-\delta}{2\sqrt{1-(1-\delta)^2}} \to
\frac{1}{2}\frac{1}{\sqrt{2\delta}}$ as $\delta \to 0$.
\item The optimal algorithm, at every time step, computes the difference $d
(\geq 0)$ of cumulative gains
between the leading and lagging expert, and chooses them with probabilities
$p_1(d) = 1-\frac{1}{2}\xi^d$, and $p_2(d) = \frac{1}{2}\xi^d$. Here $\xi = \frac{1-\sqrt{1-(1-\delta)^2}}{1-\delta} \sim 1-\sqrt{2\delta}$.
\end{enumerate}
\end{oneshot}

\subsection{Two experts: interpretation as a probability matching algorithm}
\label{app:pmatching}

\paragraph{Geometric horizon model.}
The quantity $\xi$ turns out to be precisely equal to the probability that a simple random walk that starts at $1$ will reach $0$ before the geometric process gets killed. To see this, just note that it is the root of the equation which captures the probability of the above event $\xi = (1-\delta)\cdot 0 + \delta\cdot\frac{1}{2}\cdot(1+\xi^2)$ (the root that is smaller than $1$), which is $\xi = \frac{1-\sqrt{1-(1-\delta)^2}}{1-\delta}$. 
Now, note that the minimax optimal adversary advances one of the experts 
uniformly at random and doesn't advance the other. This means that the gap 
between the cumulative gains of the leading and the lagging experts evolves as 
a random walk, and the probability that given a separation of $d$, the lagging 
expert will match the leading expert is precisely $\xi^d$. Once they match, 
each expert has an equal probability $\frac{1}{2}$ of being the leading 
expert\footnote{If the experts are tied, the leader is chosen uniformly at 
random}. This means, the probability that the currently lagging expert will 
finish as the leading expert is precisely $\frac{1}{2}\xi^d$, and the 
probability that the currently leading expert will finish as the leading expert 
is $1-\frac{1}{2}\xi^d$.

\paragraph{Finite horizon model.}
We now show that for the finite horizon case too, the optimal algorithm is precisely a probability matching algorithm, i.e, the algorithm picks each expert with the probability that the respective expert finishes in the lead (we break possible ties in favor of the unique expert who doesn't have any expert ahead of him in each of the last two steps). We set up equations very similar to~\eqref{eq:Regret2} and~\eqref{eq:max2} in the finite horizon model except that $f$ now will be a function of both $x$ and the number of time steps left $\ell=T-t$ until the deadline. Thus
\begin{align}
\label{eq:prob_matching_two_exp}
f(x,0) &= 0,\qquad \mbox{if } x\le 0\\
f(x,\ell) &= \frac{f(x+1,\ell-1)+f(x-1,\ell-1)}{2},\nonumber\\
& \qquad\qquad \mbox{if } \ell > 0, \text{and } x < 0 \nonumber\\
f(0,\ell) &= f(-1,\ell-1) + \frac{1}{2},\nonumber\\
&\qquad\qquad \mbox{if } \ell > 0, x=0\nonumber.
\end{align}

We consider a simple random walk $\srw(x,\ell)$ that starts from position $x$ and does $\ell$ steps (we also use $SRW(x,\ell)$ to denote the location of this walk after $\ell$ steps). It turns out that $g(x,\ell)=\frac{\Ex{|\srw(x,\ell)|}-|x|}{2}$ satisfies exactly the same set of equations \eqref{eq:prob_matching_two_exp} as $f(x,\ell)$ does. Thus $f(x,\ell)=g(x,\ell)$. Analogously to the geometric model, we can also derive that $p_2(x,\ell) = f(x+1,\ell-1)-f(x,\ell)$ for $x<0$ and $p_2(0,\ell)=p_1(0,\ell)=\frac{1}{2}$. We immediately get the desired probability matching result for $x=0$. To get the same for $x<0$, we do a natural coupling of random walks $\srw(x+1,\ell-1)$ and $\srw(x,\ell)$ in the expression $p_2(x,\ell)=g(x+1,\ell-1)-g(x,\ell)$. When $\srw(x+1,\ell-1)$ arrives at $y$ in this coupling, $\srw(x,\ell)$ does one more iteration from the location $y-1$. The expression $g(x+1,\ell-1) - g(x,\ell)$, given that $\srw(x+1,\ell-1)$ arrives at $y$ can be written as:
\begin{align*}
&\frac{|y|-|x+1|}{2}-\frac{\frac{1}{2}|y-2|+\frac{1}{2}|y| - |x|}{2}\\
&=\frac{\frac{1}{2}(|y|-|y-2|)+1}{2}\\
&=
\begin{cases}
0 & \mbox{if } y\le 0\\
1/2 & \mbox{if } y=1 \\
1 & \mbox{if } y>1
\end{cases}
\end{align*}
The first line in the RHS of the above expression corresponds to the situations where $\srw(x,\ell)$ arrives at $y-1<0$ at step $\ell-1$, i.e., the second expert  does not reach the leader till step $\ell-1$ (and therefore the first expert is the unique one who didn't lag in steps $\ell-1$ and $\ell$); the second line corresponds to the situations where the second expert reaches the leader at step $\ell-1$ (and therefore overtakes him with probability $1/2$ in the last step); the third line represents situations when the second expert is the unique leader after $\ell-1$ steps (and therefore the second expert is the unique one that didn't lag in steps $\ell-1$ and $\ell$). This yields the desired probability matching result.

\paragraph{Uniqueness of the optimal algorithm.} In the geometric horizon 
model, we explicitly solve the infinite system of equations and realize that 
they have a unique solution proving the uniqueness of the optimal algorithm. In 
the finite horizon model, although we don't explicitly solve the system of 
equations, the discussion in the previous paragraph shows that the 
probabilities chosen by the optimal algorithm are unique, and hence the optimal 
algorithm is unique. 

\subsection{Three experts, geometric horizon: optimal algorithm, adversary and regret}
\label{app:3experts}
We derive the optimal adversary, algorithm and regret here. We restate Theorem~\ref{thm:3opt} for ease of reading.
\begin{oneshot}{Theorem~\ref{thm:3opt}}
In the geometric horizon model for $3$ experts with parameter $\delta \in (0,1)$:
\begin{enumerate}
\item The optimal regret is $\frac{2}{3}\frac{1-\delta}{\sqrt{1-(1-\delta)^2}} \to
\frac{2}{3}\frac{1}{\sqrt{2\delta}}$ as $\delta \to 0$.
\item The optimal algorithm, at every time step, computes the differences $d_{ij}$
between the cumulative gains of experts ($i$ denotes the expert with $i$th largest cumulative gains, and hence $d_{ij}\geq 0$ for all $i < j$). As a function of the $d_{ij}$'s the algorithm
follows the leading expert with probability $p_1(\mathbf{d}) = 1-\frac{\xi^{d_{12}}}{2} - \frac{\xi^{d_{13}+d_{23}}}{6}$,
the second expert with probability
$p_2(\mathbf{d}) = \frac{\xi^{d_{12}}}{2} - \frac{\xi^{d_{13}+d_{23}}}{6}$, and the
lagging expert with probability $p_3(\mathbf{d}) = \frac{\xi^{d_{13}+d_{23}}}{3}$. Here $\xi = \frac{1-\sqrt{1-(1-\delta)^2}}{1-\delta} \sim 1-\sqrt{2\delta}$.
\item The optimal adversary, at every time step, computes the differences $d_{ij}$'s, and follows the following strategies below as a function of the $d_{ij}$'s. Strategy $\{1\}\{2\}\{3\}$ means: exclusively advancing with probability $1/3$ expert $1$ (leading expert), expert $2$ (middle expert), and expert $3$ (lagging expert). Strategy $\{1\}\{23\}$ means: advancing with probability $\frac{1}{2}$ expert $1$ alone, or advancing experts $2$ and $3$ together.
\begin{description}
\item[$0 < d_{12} < d_{13}:$]  $\{1\}\{23\}$, or $\{13\}\{2\}$.
\item[$ 0 = d_{12} < d_{13}:$]  $\{1\}\{23\}$, or $\{13\}\{2\}$.
\item[$0 < d_{12} = d_{13}:$] $\{1\}\{23\}$, or $\{1\}\{2\}\{3\}$.
\item[$0 = d_{12} = d_{13}:$] $\{1\}\{2\}\{3\}$.
\end{description}
\end{enumerate}
\end{oneshot}

\paragraph{Notational convention.} At each time period $t$ we always enumerate
experts in the decreasing order of their cumulative gains $\Gainst$. We observe
that the strategy of the adversary at any moment $t$ should not change if
cumulative gains $\Gainst$ of all experts are changed by the same amount for
every expert. Thereby, at every time step $t$ we shall always adjust the total
gains $\Gainst$ of our experts, so that the leading expert $1$ has zero
cumulative gain $\Gainit[1][t]=0$. We denote the adjusted gain
$\Gainit[i+1][t]-\Gainit[1][t]$ by $x_i(t)$ for each $i\in[2]$; we denote by
$\vx(t)=(x_1(t),x_2(t))$ the vector of adjusted gains. Note that both $x_1(t)$
and $x_2(t)$ are negative.

We denote by $f(\vx)$ the optimal regret the adversary can obtain for an
initial configuration $\Gains = (0,x_1,x_2)$ (where again $x_1, x_2 \leq 0$).
Much like the case of $k=2$, the advantage of this convention is that if we start
from configuration $(0,x_1,x_2)$ and stop immediately, the ``max-expert-gain -
algorithm's gain'' is just $0$.

Algorithm assigns probabilities $p_1(\vx)$, $p_2(\vx)$, and $p_3(\vx) =
1-p_1(\vx) - p_2(\vx)$ respectively to the leading, middle and lagging experts.
Similarly to the case $k=2$ the adversary now has eight choices and the regret
satisfies the following expression for each $\vx: 0 > x_1 > x_2$.

\begin{strip}
\be
f(x_1, x_2) =\delta\cdot 0 + (1-\delta)\cdot\max
\begin{cases}
f(x_1 - 1, x_2 - 1) +   1 - p_1     & \quad //~~~\{1\}   \\
f(x_1 + 1, x_2 + 1) - p_2 - p_3     &  \quad //~~~\{23\}  \\
f(x_1 - 1, x_2) + 1 - p_1 - p_3     &  \quad //~~~\{13\}  \\
f(x_1 + 1, x_2)     - p_2           &  \quad //~~~\{2\}   \\
f(x_1, x_2 - 1) + 1 - p_1 - p_2     &  \quad //~~~\{12\}  \\
f(x_1, x_2 + 1)     - p_3           &  \quad //~~~\{3\}   \\
f(x_1, x_2) + 1 - p_1 - p_2 - p_3   &  \quad //~~~\{123\} \\
f(x_1, x_2)                         &  \quad //~~~\{\}
\end{cases}
\label{eq:meaty_regret3}
\ee
We note that we can omit the lines $\{123\}$ and $\{\}$ in the RHS of the
expression above.  For the boundary points $\vx: 0 = x_1 > x_2$ and $\vx: 0> x=
x_1 = x_2$ we need to take into account in \eqref{eq:meaty_regret3} the
possibility that the order $0\ge x_1\ge x_2$ might  change:

\begin{align}
\frac{f(0, x_2)}{1-\delta}=\max
\begin{cases}
f(-1, x_2 - 1) + 1 - p_1              \\
f(-1, x_2)     + 1 - p_2 - p_3        \\
f(-1, x_2)     + 1 - p_1 - p_3        \\
f(-1, x_2 - 1) + 1 - p_2              \\
f( 0, x_2 - 1) + 1 - p_1 - p_2        \\
f( 0, x_2 + 1)     - p_3
\end{cases}
& \hspace{-1em}
\frac{f(x, x)}{1-\delta}=\max
\begin{cases}
f(x - 1, x - 1) + 1 - p_1          &   //\{1\}\\
f(x + 1, x + 1)     - p_2 - p_3    &   //\{23\}\\
f(x    , x - 1) + 1 - p_1 - p_3    &   //\{13\}\\
f(x + 1, x)         - p_2          &   //\{2\}\\
f(x    , x - 1) + 1 - p_1 - p_2    &   //\{12\}\\
f(x + 1, x    )     - p_3          &   //\{3\}
\end{cases}
\nonumber \\
& \label{eq:boundary_regret3}
\end{align}
\end{strip}

For $\vx: 0 = x_1 = x_2$ we have

\begin{strip}
\be
f(0, 0)=(1-\delta)\max
\begin{cases}
f(-1, -1) + 1 - p_1          &  ~~~ //\{1\}   \\
f(0 , -1) + 1 - p_2 - p_3    &  ~~~ //\{23\}  \\
f(0 , -1) + 1 - p_1 - p_3    &  ~~~ //\{13\}  \\
f(-1, -1) + 1 - p_2          &  ~~~ //\{2\}   \\
f(0 , -1) + 1 - p_1 - p_2    &  ~~~ //\{12\}  \\
f(-1, -1) + 1 - p_3          &  ~~~ //\{3\}
\end{cases}
\label{eq:zero_regret3}
\ee
\end{strip}

Our approach here will be a guess and verify approach. While there are several
strategies possible for the adversary, we discovered that the optimal strategy for the adversary is to play $\{1\},\{23\}$
or $\{13\},\{2\}$ for most of the $\vx$ (at least for those $\vx: 0 > x_1 >
x_2$). We will now compute the consequences of this being the optimal adversary
and finally verify if our guess was true.  So playing  $\{1\},\{23\}$ or
$\{13\},\{2\}$ for most of the time means that for $\vx: 0 > x_1 > x_2$ we
have

\begin{align}
f(x_1,x_2) &= \frac{1-\delta}{2}\Big[f(x_1 - 1,x_2) + f(x_1 + 1, x_2)\Big]\nonumber\\
f(x_1,x_2) &= \frac{1-\delta}{2}\Big[f(x_1 - 1,x_2 - 1) +\nonumber\\
&\qquad\qquad\qquad f(x_1 + 1, x_2 + 1)\Big].
\label{eq:meaty}
\end{align}

One can write generating function for $f(x_1,x_2):$

\[
G(u,v)=\sum_{x_1,x_2} f(x_1,x_2)u^{x_1}v^{x_2}
\]

We can write two functional relations on $G(u,v)$ from expression
\eqref{eq:meaty} and further derive a parametric expression for $f(x_1,x_2)$:

\[
f(x_1,x_2)= c_1\cdot\xi_1^{x_1} + c_2\cdot\xi_1^{2x_2-x_1} + c_3\cdot\xi_1^{-x_1} +
c_4\cdot\xi_1^{x_1-2x_2},
\]
where $c_1,c_2,c_3,c_4$ are unknown parameters and $\xi_1 > 1 >\xi_2$ are the
roots of the characteristic polynomial $x^2-\frac{2}{1-\delta}x+1=0$. It turns
out that, as the regret cannot grow faster than a linear function and cannot be
negative, it follows that $c_4$ and also $c_3$ must be 0.

From the algorithm's  point of view, the probabilities $p_1(\vx)$, $p_2(\vx)$, and
$p_3(\vx)$ must be chosen in such a way that adversary will be indifferent
between playing $\{1\}$, $\{23\}$, $\{13\}$, and $\{2\}$ for $\vx: 0 > x_1 >
x_2$. From this condition we derive that

\begin{align*}
1 - p_1 &= \left(\frac{\xi_1-\xi_2}{2}\right)\left(c_1\cdot\xi_1^{x_1} + c_2\cdot\xi_1^{2x_2-x_1}
         \right)\\
p_2 &= \left(\frac{\xi_1-\xi_2}{2}\right)\left(c_1\cdot\xi_1^{x_1} - c_2\cdot\xi_1^{2x_2-x_1}
         \right)\\
p_3 &= \left(\frac{\xi_1-\xi_2}{2}\right)\left(2c_2\cdot\xi_1^{2x_2-x_1}
         \right)
\end{align*}
We also assume that the above formula for $\vp(\vx)$ extends to the points of
the form $\vx: 0 = x_1> x_2$ and $\vx: 0>x_1=x_2$ and $\vx: 0 = x_1 =x_2.$ We
equate $p_1(\vx)$ and $p_2(\vx)$ for $\vx: 0 = x_1> x_2$, as now leading and
middle experts are identical from the adversary's point of view. Similarly, we
equate $p_2(\vx)$ and $p_3(\vx)$ for $\vx: 0>x_1=x_2$; and equate $p_1(\vx)$,
$p_2(\vx)$ and $p_3(\vx)$ for $\vx: 0 = x_1 =x_2.$ From these equations we
deduce that
\begin{align*}
c_1 = \frac{1}{\xi_1-\xi_2}; \qquad
c_2 = \frac{1}{3\left(\xi_1-\xi_2\right)}.
\end{align*}
This results in the following expression for the regret $f(\vx)$:
\be
f(\vx)= \frac{\xi_1^{x_1}}{\xi_1-\xi_2} + \frac{\xi_1^{2x_2-x_1}}{3\left(\xi_1-\xi_2\right)},
\label{eq:regret3}
\ee
which gives us regret of $\frac{4}{3\left(\xi_1-\xi_2\right)}$ at $\vx=(0,0)$. As $\delta\to 0$ the regret

\[
\regretd=\frac{4}{3\left(\xi_1-\xi_2\right)}=\frac{2(1-\delta)}{3\sqrt{\delta\cdot(2-\delta)}}
\underset{_{\delta\to 0}}{\longrightarrow} \frac{2}{3\sqrt{2\delta}}.
\]

\begin{theorem} Equation \eqref{eq:regret3} gives the precise value of the
regret for every normalized $\vx:0\ge x_1\ge x_2$. Moreover, the optimal
algorithm chooses leading, middle and lagging experts respectively with the
following probabilities $p_1(\vx)$, $p_2(\vx)$, and $p_3(\vx)$:
\begin{align}
1 - p_1(\vx) &= \frac{\xi_1^{x_1}}{2} + \frac{\xi_1^{2x_2-x_1}}{6}  \nonumber    \\
p_2(\vx)     &= \frac{\xi_1^{x_1}}{2} - \frac{\xi_1^{2x_2-x_1}}{6}  \nonumber    \\
p_3(\vx)     &= \frac{\xi_1^{2x_2-x_1}}{3}
\label{eq:alg_prob3}
\end{align}
\label{th:correctness_regret}
\end{theorem}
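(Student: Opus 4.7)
The plan is a guess-and-verify argument: I would verify that the algorithm $\vp(\vx)$ from \eqref{eq:alg_prob3} together with the adversary strategies listed in Theorem~\ref{thm:3opt} form a minimax equilibrium of the zero-sum game on the domain $0 \ge x_1 \ge x_2$. Once this is in hand, the minimax relation \eqref{eq:delta_minimax} identifies the common value as $f(\vx)$ given in \eqref{eq:regret3}, proving the stated formula for the optimal regret, and the construction pins down $\vp$ as a best response of the algorithm to the equilibrium adversary.

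First I would check that $\vp$ is a bona fide probability distribution. Since $\xi_1>1$ and since both $x_1$ and $2x_2-x_1$ are at most $0$ on the domain (the latter because $x_2\le x_1$), we have $\xi_1^{x_1},\xi_1^{2x_2-x_1}\in(0,1]$. The three entries of $\vp$ sum to $1$ by inspection; nonnegativity of $p_3$ is immediate, nonnegativity of $p_2$ reduces to $\xi_1^{2(x_2-x_1)}\le 3$ (which holds since $x_2\le x_1$), and nonnegativity of $p_1$ follows from $\tfrac12\xi_1^{x_1}+\tfrac16\xi_1^{2x_2-x_1}\le\tfrac23<1$. Next I would verify the interior recursion at every $\vx$ with $0>x_1>x_2$. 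The function $\vp$ was derived precisely to make the adversary indifferent among the four actions $\{1\},\{2\},\{13\},\{23\}$, so substituting \eqref{eq:regret3} and \eqref{eq:alg_prob3} into the corresponding four lines of \eqref{eq:meaty_regret3} must yield $f(\vx)/(1-\delta)$ exactly; this collapse uses only the defining identities $\xi_1+\xi_2=\tfrac{2}{1-\delta}$ and $\xi_1\xi_2=1$. For the remaining four actions one must show the RHS is no larger: the actions $\{\}$ and $\{123\}$ are ruled out by the balanced-adversary principle (Claim~\ref{cl:alg_balanced}), while $\{12\}$ and $\{3\}$ are dominated through comparisons of the form $\xi_1^{x_2}\le\xi_1^{x_1}$ applied to the closed-form $f$.

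For the boundary strata $\{x_1=0>x_2\}$, $\{0>x_1=x_2\}$, and $\{x_1=x_2=0\}$, the same verification must be carried out against \eqref{eq:boundary_regret3} and \eqref{eq:zero_regret3}, with the additional subtlety that a single step of the walk can reorder the leading, middle and lagging experts, so $f$ must be reevaluated after the appropriate relabeling of coordinates before comparison. In each stratum I would check that the mixed adversary strategy listed in Theorem~\ref{thm:3opt} attains the maximum on the corresponding RHS and that $\vp$ equalizes the active lines, so that the indifference conditions of minimax hold on the boundary as well. The main obstacle is this boundary bookkeeping: the number of cases grows, the relabelings straddle the boundaries between strata, and one must verify both that the claimed mixes are tight and that every other adversary action is dominated; the interior part, by contrast, is essentially forced by the derivation leading to \eqref{eq:regret3}. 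Once every case is confirmed, the pair $(\vp,\text{adversary from Theorem~\ref{thm:3opt}})$ is a Nash equilibrium of value $f(\vx)$, and \eqref{eq:delta_minimax} delivers both conclusions of the theorem.
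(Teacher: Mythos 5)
Your overall plan is the same guess-and-verify argument the paper uses: substitute the closed forms \eqref{eq:regret3} and \eqref{eq:alg_prob3} into the Bellman-like system \eqref{eq:meaty_regret3}--\eqref{eq:zero_regret3}, check indifference among the four active adversary actions and dominance of the rest, treat the three boundary strata separately (accounting for relabeling), and invoke minimax to identify the common value. The paper executes exactly this as Lemmas~\ref{le:meaty}--\ref{le:vertex3}, so the structure matches.

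One step in your sketch is wrong and would fail if carried out as written. You say the actions $\{\}$ and $\{123\}$ are ``ruled out by the balanced-adversary principle.'' They are not: both are perfectly balanced, since all experts receive the same expected gain ($0$ in the first case, $1$ in the second). The actual reason they are dominated is much more pedestrian: both produce the line $(1-\delta)\,f(\vx)$, which is strictly smaller than the LHS $f(\vx)$ because $f>0$ and $\delta>0$ --- they waste a step. The paper states this in Lemma~\ref{le:meaty} as ``Clearly, the lines $\{\}$ and $\{123\}$ in the RHS are smaller than the LHS.'' This is a small slip, but if you actually tried to invoke Claim~\ref{cl:alg_balanced} here you would find it gives you nothing. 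A second, more cosmetic imprecision: the dominance of lines $\{12\}$ and $\{3\}$ does not reduce to $\xi_1^{x_2}\le\xi_1^{x_1}$; the comparisons the paper actually needs are $\xi_1^{2(x_1-x_2)}\ge 1\ge\frac{2\xi_2+1}{3}$ for $\{12\}$ and $\xi_1^{2(x_1-x_2)}\ge\xi_1^{2}\ge\frac{2\xi_1+1}{3}$ for $\{3\}$, the latter using that $x_1-x_2\ge 1$ in the interior. These fixes are local; once made, your argument coincides with the paper's proof.
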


\begin{proof} To prove this theorem we shall first verify that the function $f(\cdot)$ given
by \eqref{eq:regret3} together with the probabilities \eqref{eq:alg_prob3}
satisfies combined system of equations \eqref{eq:meaty_regret3},
\eqref{eq:boundary_regret3}, \eqref{eq:zero_regret3} for every $\vx:0\ge x_1\ge
x_2$.

Then the expression \eqref{eq:regret3} immediately gives us an upper bound on
the regret function $f(\vx)$. Indeed, if we fix strategy of the algorithm to be
as in \eqref{eq:alg_prob3}, then $f(\vx)$ would be an upper bound on the regret
that the best response adversary (with respect to this fixed algorithm) could
get.

Finally, to show matching lower bound we will consider the best response
strategy of the adversary in \eqref{eq:regret3}, i.e., those lines in RHS of
\eqref{eq:meaty_regret3}, \eqref{eq:boundary_regret3}, \eqref{eq:zero_regret3} which
are equal to LHS. We will make sure that among these strategies the adversary
can always compose a mixed strategy which is balanced, i.e. the one that makes
algorithm completely indifferent between all experts. Assume that we have
restricted our adversary to these mixed strategies. 
Then any algorithm will be
the best response algorithm, in particular the algorithm defined by
\eqref{eq:alg_prob3}. Hence, this particular restricted strategy of the
adversary provides a lower bound given by \eqref{eq:regret3} on the regret
function $f(\vx)$.


We begin by verifying \eqref{eq:meaty_regret3} for the interior points $\vx:0> x_1> x_2.$
\begin{lemma}
Equation \eqref{eq:meaty_regret3} holds true for the interior points $\vx:0> x_1> x_2.$
\label{le:meaty}
\end{lemma}
\begin{proof}
\begin{strip}
\bee
\frac{\xi_1^{x_1}}{\xi_1-\xi_2} + \frac{\xi_1^{2x_2-x_1}}{3\left(\xi_1-\xi_2\right)}= (1-\delta)\cdot\max
\begin{cases}
\frac{\xi_1^{x_1-1}}{\xi_1-\xi_2} + \frac{\xi_1^{2x_2-x_1-1}}{3\left(\xi_1-\xi_2\right)}
 +   \frac{\xi_1^{x_1}}{2} + \frac{\xi_1^{2x_2-x_1}}{6}
  &  \quad //~~~\{1\}   \\
\frac{\xi_1^{x_1+1}}{\xi_1-\xi_2} + \frac{\xi_1^{2x_2-x_1+1}}{3\left(\xi_1-\xi_2\right)}
 -   \frac{\xi_1^{x_1}}{2} - \frac{\xi_1^{2x_2-x_1}}{6}
  &  \quad //~~~\{23\}  \\
\frac{\xi_1^{x_1-1}}{\xi_1-\xi_2} + \frac{\xi_1^{2x_2-x_1+1}}{3\left(\xi_1-\xi_2\right)}
 +   \frac{\xi_1^{x_1}}{2} - \frac{\xi_1^{2x_2-x_1}}{6}
  &  \quad //~~~\{13\}  \\
\frac{\xi_1^{x_1+1}}{\xi_1-\xi_2} + \frac{\xi_1^{2x_2-x_1-1}}{3\left(\xi_1-\xi_2\right)}
 -   \frac{\xi_1^{x_1}}{2} + \frac{\xi_1^{2x_2-x_1}}{6}
  &  \quad //~~~\{2\}   \\
\frac{\xi_1^{x_1}}{\xi_1-\xi_2} + \frac{\xi_1^{2x_2-x_1-2}}{3\left(\xi_1-\xi_2\right)}
 +   \frac{\xi_1^{2x_2-x_1}}{3}
  &  \quad //~~~\{12\}  \\
\frac{\xi_1^{x_1}}{\xi_1-\xi_2} + \frac{\xi_1^{2x_2-x_1+2}}{3\left(\xi_1-\xi_2\right)}
 -   \frac{\xi_1^{2x_2-x_1}}{3}
  &  \quad //~~~\{3\}   \\
\frac{\xi_1^{x_1}}{\xi_1-\xi_2} + \frac{\xi_1^{2x_2-x_1}}{3\left(\xi_1-\xi_2\right)}
  &  \quad //~~~\{123\} \\
\frac{\xi_1^{x_1}}{\xi_1-\xi_2} + \frac{\xi_1^{2x_2-x_1}}{3\left(\xi_1-\xi_2\right)}
  &  \quad //~~~\{\}
\end{cases}
\eee
\end{strip}
Clearly, the lines $\{\}$ and $\{123\}$ in the RHS are smaller than the LHS.
Since we have chosen $f(\vx)$ according to \eqref{eq:meaty}, it immediately
follows that the average of the lines $\{1\}$ and $\{23\}$ in RHS as well as
average of the lines $\{13\}$ and $\{2\}$ in RHS are equal to the the LHS. We
further notice that $p_1, p_2, p_3$ were chosen so that the RHS expressions in
lines $\{1\}$ and $\{23\}$ are equal as well as are equal expressions in lines
$\{13\}$ and $\{2\}$. This makes the expressions in lines 1-4 in the RHS to be
equal to the LHS.

We are only left to verify that LHS is greater than or equal to the expressions
in the lines 5 and 6 in the RHS. We recall that $\xi_1>1>\xi_2$ are the roots
of the polynomial $x^2-\frac{2}{1-\delta}x+1$, so that $\xi_1\cdot\xi_2=1$ and
$\xi_1 + \xi_2 = \frac{2}{1-\delta}$.

For the line \{12\} in RHS we need to verify the following.

\begin{strip}
\bee
\frac{1}{1-\delta}\left(\frac{\xi_1^{x_1}}{\xi_1-\xi_2} +
\frac{\xi_1^{2x_2-x_1}}{3\left(\xi_1-\xi_2\right)}\right)
\ge
\frac{\xi_1^{x_1}}{\xi_1-\xi_2} + \frac{\xi_1^{2x_2-x_1-2}}{3\left(\xi_1-\xi_2\right)}  +   \frac{\xi_1^{2x_2-x_1}}{3}
\eee

Equivalently, we need to show

\begin{align*}
\left(\frac{\delta}{1-\delta}\right)\frac{\xi_1^{x_1}}{\xi_1-\xi_2} 
& \ge \frac{\xi_1^{2x_2-x_1}}{3\left(\xi_1-\xi_2\right)} \left(\xi_2^2-\frac{1}{1-\delta}+\xi_1-\xi_2\right)
\\
&= \frac{\xi_1^{2x_2-x_1}}{3\left(\xi_1-\xi_2\right)}
\left(\xi_2^2-\frac{1}{1-\delta}+\frac{2}{1-\delta}-2\xi_2\right)\\
&= \frac{\xi_1^{2x_2-x_1}}{3\left(\xi_1-\xi_2\right)}
\left(\frac{2}{1-\delta}\xi_2-1-2\xi_2+\frac{1}{1-\delta}\right)\\
&= \frac{\xi_1^{2x_2-x_1}}{3\left(\xi_1-\xi_2\right)}
\left(\frac{\delta}{1-\delta}\right)\left(2\xi_2+1\right).
\end{align*}

The last inequality holds true as
\bee
\xi_1^{2x_1-2x_2}\ge 1 \ge  \frac{2\xi_2+1}{3}.
\eee

Similarly for the line $\{3\}$ in RHS we need to verify that
\bee
\frac{1}{1-\delta}\left(\frac{\xi_1^{x_1}}{\xi_1-\xi_2} +
\frac{\xi_1^{2x_2-x_1}}{3\left(\xi_1-\xi_2\right)}\right) \ge
\frac{\xi_1^{x_1}}{\xi_1-\xi_2} + \frac{\xi_1^{2x_2-x_1+2}}{3\left(\xi_1-\xi_2\right)}
 -   \frac{\xi_1^{2x_2-x_1}}{3}
\eee

After some transformation we need to show that

\begin{align*}
\left(\frac{\delta}{1-\delta}\right)\frac{\xi_1^{x_1}}{\xi_1-\xi_2} & \ge
\frac{\xi_1^{2x_2-x_1}}{3\left(\xi_1-\xi_2\right)} \left(\xi_1^2-\frac{1}{1-\delta}-\xi_1+\xi_2\right)\\
&= \frac{\xi_1^{2x_2-x_1}}{3\left(\xi_1-\xi_2\right)}
\left(\xi_1^2-\frac{1}{1-\delta}+\frac{2}{1-\delta}-2\xi_1\right) \\
&= \frac{\xi_1^{2x_2-x_1}}{3\left(\xi_1-\xi_2\right)}
\left(\frac{2}{1-\delta}\xi_1-1-2\xi_1+\frac{1}{1-\delta}\right)\\
&= \frac{\xi_1^{2x_2-x_1}}{3\left(\xi_1-\xi_2\right)}
\left(\frac{\delta}{1-\delta}\right)\left(2\xi_1+1\right).
\end{align*}
\end{strip}

We further compare LHS with RHS of the last inequality. We need to prove that

\bee
\xi_1^{2x_1-2x_2} \ge  \frac{2\xi_1+1}{3}.
\eee

Since $x_1>x_2$, we observe that $\xi_1^{2x_1-2x_2}\ge\xi_1^2$. The desired
inequality is true, as $\xi_1^2\ge\frac{2\xi_1+1}{3}.$
\end{proof}

We next consider a few cases for boundary points when there are ties between leading, middle and legging experts. However, if there is no change in the order of experts after adversary's action, most of our derivations in Lemma~\ref{le:meaty} applies to the boundary cases as well.

\begin{lemma}
Equation \eqref{eq:boundary_regret3} holds true for the boundary points $\vx:0= x_1> x_2.$
\label{le:boundary00x}
\end{lemma}
\begin{proof}
We note that the expressions in the lines $\{1\}$, $\{13\}$, $\{12\}$, and $\{3\}$  of \eqref{eq:boundary_regret3} are the same as in Lemma~\ref{le:meaty} for $x_1 = 0$, since the order of leading, middle, and legging experts does not change for any of these choices of the adversary.

We observe that $p_1 = 1 - \frac{1}{2} - \frac{\xi_1^{2x_2}}{6}= \frac{1}{2} -
\frac{\xi_1^{2x_2}}{6} = p_2$ for boundary points $\vx: 0= x_1> x_2$.
Furthermore, as first two experts are the same, leading and middle expert are equivalent from the perspective of the adversary. It implies that lines $\{1\}$ and $\{2\}$ as well as lines $\{13\}$ and $\{23\}$ in the RHS of \eqref{eq:boundary_regret3} are identical. We conclude the proof by observing that
\begin{enumerate}
\item LHS is equal to the line $\{1\}$ in RHS (same argument as in Lemma~\ref{le:meaty}), which is equal to the expression in the line $\{2\}$ of RHS.
\item LHS is equal to the expression in the line $\{13\}$ in RHS (same argument as in Lemma~\ref{le:meaty}), which is the same as the line $\{23\}$ in RHS.
\item LHS is at least the expressions in the lines $\{12\}$ and $\{3\}$ (same argument as in Lemma~\ref{le:meaty}). Indeed, we only used the fact that $x_1-x_2 > 0$ and analytically all the rest derivations remain the same as in Lemma~\ref{le:meaty}.
\end{enumerate}
\end{proof}

\begin{lemma}
Equation \eqref{eq:boundary_regret3} holds true for the boundary points $\vx:0 > x = x_1 = x_2.$
\label{le:boundary0xx}
\end{lemma}
\begin{proof}
We note that the expressions in the lines $\{1\}$, $\{2\}$, $\{12\}$, and $\{23\}$  of \eqref{eq:boundary_regret3} are the same as in Lemma~\ref{le:meaty} for $x = x_1 = x_2$, since the order of leading, middle, and legging experts does not change for any of these choices of the adversary.

We observe that $p_2 = \frac{\xi_1^{x}}{2} - \frac{\xi_1^{x}}{6}  = \frac{\xi_1^{x}}{3} = p_3$
for boundary points $\vx: 0> x =  x_1 = x_2$. Furthermore, as last two experts are the same, middle and legging experts are equivalent from the perspective of the adversary. It implies that lines $\{2\}$ and $\{3\}$ as well as lines $\{13\}$ and $\{12\}$ in the RHS of \eqref{eq:boundary_regret3} are identical. We conclude the proof by observing that
\begin{enumerate}
\item LHS is equal to the expression in the line $\{2\}$ in RHS (same argument as in Lemma~\ref{le:meaty}), which is equal to the expression in the line $\{3\}$ of RHS.
\item LHS is at least the expression in the line $\{12\}$ in RHS (same argument as in Lemma~\ref{le:meaty}), which is the same as the line $\{13\}$ in RHS. Indeed, for the line $\{12\}$ we don't need $x_1$ to be strictly greater than $x_2$ and our derivations as in Lemma~\ref{le:meaty} do not change.
\item LHS is equal to the expressions in the lines $\{12\}$ and $\{3\}$ (same argument as in Lemma~\ref{le:meaty}).
\end{enumerate}
\end{proof}

\begin{lemma}
Equation \eqref{eq:zero_regret3} holds true for the boundary point $\vx:0 = x_1 = x_2.$
\label{le:vertex3}
\end{lemma}
\begin{proof}
We observe that $p_1 = 1- \frac{\xi_1^{0}}{2} - \frac{\xi_1^{0}}{6} =  \frac{\xi_1^{0}}{2} - \frac{\xi_1^{0}}{6} = p_2  = \frac{\xi_1^{0}}{3} = p_3$. Therefore, the lines $\{1\}$, $\{2\}$, and $\{3\}$ in RHS are identical, similarly are identical the lines $\{12\}$, $\{23\}$, and $\{13\}$. We also notice that expression in the lines $\{1\}$ and $\{12\}$ in \eqref{eq:zero_regret3} are special cases of the corresponding expressions in \eqref{eq:meaty_regret3} for $x_1=x_2=0$. Expression in the line $\{12\}$ in RHS is not greater than LHS, because our derivations from Lemma~\ref{le:meaty} analytically remain the same and for the expression in line $\{12\}$ we only need $x_1\ge x_2$.

We conclude the proof by observing that
\begin{enumerate}
\item LHS is equal to the expression in the line $\{1\}$ in RHS (same argument as in Lemma~\ref{le:meaty}), which is equal to the expressions in the lines $\{2\}$ and $\{3\}$ in RHS.
\item LHS is at least the expression in the line $\{12\}$ in RHS (same argument as in Lemma~\ref{le:meaty}), which is the same as the lines $\{13\}$ and $\{23\}$ in RHS.
\end{enumerate}
\end{proof}

We summarize below the best choices for the adversary (lines in RHS of \eqref{eq:meaty_regret3},\eqref{eq:boundary_regret3},\eqref{eq:zero_regret3} which are equal to LHS).
\begin{description}
\item[$\vx: 0 > x_1 > x_2$]  $\{1\}$, $\{23\}$, $\{13\}$, $\{2\}$.
\item[$\vx: 0 = x_1 > x_2$]  $\{1\}$, $\{2\}$, $\{13\}$, $\{23\}$.
\item[$\vx: 0 > x_1 = x = x_2$] $\{1\}$, $\{23\}$, $\{2\}$, $\{3\}$.
\item[$\vx: 0 = x_1 = x_2$] $\{1\}$, $\{2\}$, $\{3\}$.
\end{description}

The corresponding mixed balanced strategies of the adversary are:
\begin{description}
\item[$\vx: 0 > x_1 > x_2$]  $\{1\}\{23\}$, or $\{13\}\{2\}$.
\item[$\vx: 0 = x_1 > x_2$]  $\{1\}\{23\}$, or $\{13\}\{2\}$.
\item[$\vx: 0 > x_1 = x = x_2$] $\{1\}\{23\}$, or $\{1\}\{2\}\{3\}$.
\item[$\vx: 0 = x_1 = x_2$] $\{1\}\{2\}\{3\}$.
\end{description}
This concludes the proof of Theorem~\ref{th:correctness_regret} and hence
Theorem~\ref{thm:3opt}.
\end{proof}

\section{Comparison with multiplicative weights algorithm}
\label{app:mwa}
In this section, we show that the optimal algorithm is not in the family of multiplicative weight algorithms.

\paragraph{Multiplicative weights algorithm (MWA).} Given cumulative gains
$\Gainit[1][t-1],\dots,\Gainit[k][t-1]$ for the $k$ experts after $t-1$ steps,
MWA computes the exponentials of these cumulative gains and follows expert $i$
with probability proportional to these exponentials. Formally, MWA at time
$t$ follows expert $i$ with probability $\frac{\exp(\eta\Gainit[i][t-1])}{\sum_j
\exp(\eta\Gainit[j][t-1])}$, where $\eta$ is a parameter that can be tuned. For
the special case of $2$ experts, this description can be simplified: let
$d(t-1) = \Gainit[1][t-1]-\Gainit[2][t-1]$ where we use $1$ and $2$ denote the
leading and lagging experts respectively. Then, MWA follows the leading expert
with probability $\frac{e^{\eta d(t-1)}}{e^{\eta d(t-1)}+1}$ and the lagging expert with probability $\frac{1}{e^{\eta d(t-1)}+1}$.

\subsection{Optimal algorithm is not in the MWA family}
Even for $k=2$ experts, the optimal algorithm is not in the MWA family. From the tuple representation of MWA and OPT, namely, 
\[
\text{MWA:$\left(\frac{e^{\eta
d}}{e^{\eta d}+1}, \frac{1}{e^{\eta d}+1}\right)$, and OPT:
$\left(1-\frac{1}{2}\xi^d, \frac{1}{2}\xi^d\right)$,} 
\]
it is clear that the
optimal algorithm cannot be expressed as a multiplicative weights algorithm. We
now show that even a convex combination of MWAs cannot express it.

\begin{fact}
\label{fact:MWAconvex}
No convex combination of multiplicative weight algorithms can express the 
optimal algorithm.
\end{fact}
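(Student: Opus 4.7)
The plan is a proof by contradiction that repeatedly peels off the leading exponential. Suppose a probability measure $\mu$ on $\mathbb{R}$ represents the alleged convex combination and satisfies $q_\mu(d):=\int \frac{1}{1+e^{\eta d}}\,d\mu(\eta)=\frac{1}{2}\xi^d$ for every integer $d\ge 0$; write $\eta_0 := -\ln \xi>0$ so the target is $\frac{1}{2}e^{-\eta_0 d}$. First I would localize $\mu$: any mass on $(-\infty,0]$ would keep $q_\mu(d)$ bounded away from $0$, so $\mu$ is supported on $(0,\infty)$. Using the elementary sandwich $\frac{1}{2}e^{-\eta d}\le\frac{1}{1+e^{\eta d}}\le e^{-\eta d}$ for $\eta,d\ge 0$, the exponential decay rate of $q_\mu$ coincides with the abscissa of convergence $\inf\mathrm{supp}(\mu)$, which must therefore equal $\eta_0$.

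The crux is the atom-extraction step. If $\mu$ had no atom at $\eta_0$, dominated convergence would give $q_\mu(d) = o(e^{-\eta_0 d})$, contradicting the exact leading coefficient $\frac{1}{2}$. Writing $c:=\mu(\{\eta_0\})$ and splitting $q_\mu$ into the atomic contribution and the remainder on $(\eta_0,\infty)$, the remainder is $o(e^{-\eta_0 d})$, so matching the coefficient of $e^{-\eta_0 d}$ yields $c=\frac{1}{2}$. Now set $\mu_1 := \mu - \frac{1}{2}\delta_{\eta_0}$, a positive measure of total mass $\frac{1}{2}$ on $(\eta_0,\infty)$; a short calculation gives
\[
\int \frac{1}{1+e^{\eta d}}\,d\mu_1(\eta) \;=\; \tfrac{1}{2}\xi^d - \frac{1}{2(1+e^{\eta_0 d})} \;=\; \frac{1}{2(e^{\eta_0 d}+e^{2\eta_0 d})},
\]
whose leading asymptotic is $\frac{1}{2}e^{-2\eta_0 d}$.

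Rerunning the same argument on $\mu_1$ forces $\inf\mathrm{supp}(\mu_1) = 2\eta_0$ and an atom of mass $\frac{1}{2}$ there; since $\mu_1$'s total mass is only $\frac{1}{2}$, this pins down $\mu_1 = \frac{1}{2}\delta_{2\eta_0}$. Plugging back gives $\int \frac{1}{1+e^{\eta d}}\,d\mu_1 = \frac{1}{2(1+e^{2\eta_0 d})}$, which differs from the required $\frac{1}{2(e^{\eta_0 d}+e^{2\eta_0 d})}$ for every $d\ge 1$ (they agree only when $e^{\eta_0 d}=1$, i.e., $\eta_0=0$), contradicting $\eta_0>0$. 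The main obstacle is the rigorous justification that the leading exponential rate and its coefficient can be read off from the integer values of $q_\mu$; this follows from the sandwich bound above together with a dominated-convergence estimate on the tail of $\mu$ strictly above the current minimum of the support, and it is the only step requiring measure-theoretic care --- the rest is algebra, and the recursion terminates at the second peel because the total mass budget of $\mu_1$ is already exhausted by the forced atom at $2\eta_0$.
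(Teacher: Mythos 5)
Your proof is correct and takes essentially the same route as the paper's: identify the forced atom of mass $\tfrac12$ at $\eta_0 = -\ln\xi$, peel it off, and find a second forced atom of mass $\tfrac12$ at $2\eta_0$. You are in fact somewhat more complete than the paper's sketch, which stops at the requirement $\mu(\{2\eta_0\}) = \tfrac12$ and leaves the closing contradiction implicit; you make it explicit by noting that the mass budget then pins down $\mu_1 = \tfrac12\delta_{2\eta_0}$, and plugging back shows this fails for all $d\ge 1$.
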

\begin{proof}
We show that even if at every step, the parameter $\eta$ was allowed to
be drawn from a measure $\mu$, MWA cannot express the optimal algorithm, i.e.,
for no measure $\mu$ can we have that for all integer $d \geq 0$,
$\int_{-\infty}^{\infty} \frac{\mathrm{d}\mu(\eta)}{e^{\eta d}+1} = \frac{1}{2}\xi^d$,
or equivalently, $\int_{-\infty}^{\infty} \frac{\mathrm{d}\mu(\eta)}{(\xi e^{\eta})^
d+\xi^d} = \frac{1}{2}$. Let $\eta_0$ be such that $\xi e^{\eta_0}=1$ (note
that $\xi < 1$). The measure on $\{\eta: \eta < \eta_{0}\}$ should be $0$ for \
otherwise the denominator in the integral goes to $0$ as $d\to\infty$, which
will make the integral go to $\infty$ where as the RHS is just $\frac{1}{2}$.
Likewise, any measure on ${\eta: \eta > \eta_{0}}$ doesn't contribute to the
integral as $d\to\infty$ since the integral will anyway be $0$ in the region
$(\eta_0,\infty)$. Thus, for the integral to be $\frac{1}{2}$ as $d\to\infty$,
we need $\mu(\eta_0) = \frac{1}{2}$.  We now expand twice the LHS, namely, the
integral  $\int_{\eta_0}^{\infty} \frac{2 \mathrm{d}\mu(\eta)}{(\xi e^{\eta})^ d+\xi^d}$ by splitting it into two terms: the first term is the integral at $\eta_0$ where the measure is $\mu(\eta_0) = \frac{1}{2}$ and $\xi e^{\eta_0} = 1$, and the second term is the integral in the region $(\eta_0,\infty)$. So we have $\int_{\eta_0}^{\infty} \frac{2 \mathrm{d}\mu(\eta)}{(\xi e^{\eta})^ d+\xi^d} = \frac{1}{\xi^d+1}+\int_{(\eta_0,\infty)}\frac{2 \mathrm{d}\mu(\eta)}{(\xi e^{\eta})^d+\xi^d}.$ If this were to be equal to twice the RHS, namely $1$, we need $\frac{\xi^d}{\xi^d+1} = \int_{(\eta_0,\infty)}\frac{2 \mathrm{d}\mu(\eta)}{(\xi e^{\eta})^
d+\xi^d}$. Or equivalently, we need $\frac{1}{\xi^d+1} = \int_{(\eta_0,\infty)}\frac{2 \mathrm{d}\mu(\eta)}{(\xi^2 e^{\eta})^
d+\xi^{2d}}$. Now, as $d\to\infty$, the LHS approaches $1$. For the RHS to approach $1$ as $d\to\infty$, we need that $\mu(\eta')=\frac{1}{2}$, where $\eta'$ is such that $\xi^2\eta'=1$. This completes the proof.
\end{proof}

\end{document}